\documentclass[conference]{IEEEtran}

\usepackage{amsmath,amssymb,amsthm}
\usepackage{booktabs}
\usepackage{tabularx}
\usepackage{enumitem}
\usepackage{xcolor}
\usepackage{microtype}
\usepackage{array}
\usepackage{graphicx}
\usepackage{cite}
\usepackage[hyphens]{url}
\usepackage[hidelinks]{hyperref}

\setlist[itemize]{leftmargin=1.2em,topsep=3pt,itemsep=2.5pt,parsep=0pt}
\setlist[enumerate]{leftmargin=1.5em,topsep=3pt,itemsep=2.5pt,parsep=0pt}

\newcolumntype{Y}{>{\raggedright\arraybackslash}X}

\theoremstyle{definition}
\newtheorem{definition}{Definition}
\newtheorem{assumption}{Assumption}
\theoremstyle{plain}
\newtheorem{theorem}{Theorem}
\newtheorem{proposition}{Proposition}
\newtheorem{corollary}{Corollary}
\theoremstyle{remark}
\newtheorem{remark}{Remark}

\title{Are You Still the Agent I Authorized?\\
Earned Authority under a Fixed Ceiling for Evolving Agents}

\author{
\IEEEauthorblockN{Zhaoxi Zhang}
\IEEEauthorblockA{University of Technology Sydney\\
zhaoxi.zhang-1@student.uts.edu.au}
\and
\IEEEauthorblockN{Xiaomei Zhang}
\IEEEauthorblockA{Griffith University\\
xiaomei.zhang@griffithuni.edu.au}
}

\begin{document}
\bstctlcite{IEEEtran:BSTcontrol}
\maketitle

\begin{abstract}
Long-lived AI agents increasingly evolve after deployment by retaining experience, acquiring skills and tools, revising workflows, delegating work, and moving across task phases.
This improves adaptation but creates a distinct authorization problem.
Tool-enabled agents can turn model errors and prompt injections into consequential external actions; when evolution occurs under a live grant, the subject exercising that authority or the context in which it acts may no longer match what the user evaluated.
Evolution can change both the effects reachable under an old grant and the authority required by the task, which may rise, fall, or become incomparable.
Existing tool policies constrain actions but do not determine when a grant survives this change.

We formulate \emph{authorization continuity}: when does an existing grant remain valid, how may active authority change, and what boundary must never move?
Our state-bound model fixes a transition envelope and an immutable effect ceiling at grant time.
The envelope determines whether the grant survives a mutation; below the ceiling, authority may contract freely and expand only under specified evidence conditions.
We distinguish requested from realized effects and prove that, under complete mediation, sound effect abstraction, attenuating delegation, and monitor integrity, mutation cannot amplify protected effects beyond the user-issued ceiling.
Agent-produced evidence may allocate authority below the ceiling but cannot raise it.
Finally, we map six mutation classes to their authorization consequences.
\end{abstract}

\section{Introduction}
\label{sec:introduction}

Agents deployed for open-ended, long-lived work cannot be configured in advance for every task, tool, failure mode, or change in their operating context.
Knowledge becomes outdated, new services appear, users develop local conventions, and long-horizon tasks expose errors that were not represented during training.
This limitation motivates lifelong and self-evolving agents that adapt after deployment by retaining feedback, extracting reusable experience, accumulating skills, or revising their own control logic \cite{gao2026selfevolvingsurvey, zheng2025lifelong}.
Reflexion stores linguistic feedback in episodic memory, ExpeL distills experience across tasks, and Voyager accumulates executable skills through interaction \cite{shinn2023reflexion, zhao2023expel, wang2023voyager}.
Automated Design of Agentic Systems, G\"odel Agent, and the Darwin G\"odel Machine extend adaptation to prompts, workflows, agent code, or the improvement process itself \cite{hu2024adas, yin2024godelagent, zhang2025dgm}.
Across these mechanisms, evolution is a practical response to a common bottleneck: a useful long-lived agent must learn from interaction rather than repeatedly start from its deployment-time state.

\begin{figure}[!t]
\centering
\includegraphics[width=\linewidth]{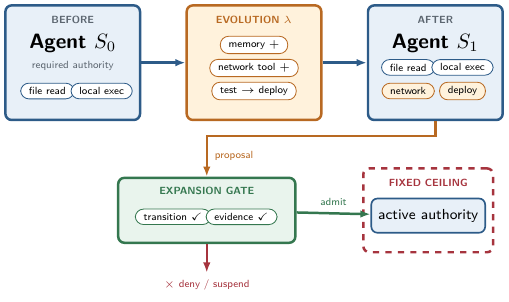}
\caption{
    An example mutation stores new memory, adds a network tool, and enters deployment, expanding the authority needed by the task.
    The changed need produces only a proposal: expansion requires both transition compatibility and admissible evidence.
    Active authority remains inside the ceiling fixed at grant time; failure of either guard denies the expansion or suspends the grant.
}
\label{fig:teaser}
\end{figure}

These gains arrive against an already fragile security baseline: deployed LLMs remain vulnerable to adversarial attacks \cite{zhang2022evaluating, zhang2022self, zhang2024stealing, zhang2025exploring, zhang2026character, zhang2026less}, and tool-enabled agents raise the stakes because model-generated decisions can read sensitive data, modify external state, and trigger consequential actions \cite{beurerkellner2025patterns, debenedetti2025camel, nist2026redteam}.
Evolution adds a further complication: it may change the subject exercising an existing grant or the authorization context in which that grant is used.
A user authorizes an agent under particular memory, tools, workflow, task, expected effects, and enforcement conditions.
Suppose a coding agent authorized to diagnose a test failure later stores untrusted tool output as a reusable rule, replaces a local test routine with a network-backed skill, or delegates the next phase to a release subagent.
If the session grant persists, the evolved state can inherit repository, network, credential, or deployment authority even though the user evaluated only its predecessor.
The security problem therefore appears as soon as mutation and authorization overlap in time: when does an old grant remain valid, how may active authority change, and what boundary must no evolution cross?

We use \emph{evolution} for this broad phenomenon and \emph{mutation} for one concrete state transition within it.
The relevant state includes persistent memory, instructions, skills, tools, workflow code, delegation structure, task phase, and enforcement configuration, not only model weights.
Some mutations occur across generations; others occur while one user grant and one session credential remain active.
Evolution also converts transient inputs into durable control state: repository content, tool output, or web data can be distilled into memory, a skill, or a modified workflow that affects later protected actions.
Recent studies find that even benign experience can reduce safety in high-risk settings, that locally correct experiences can be poisoned into harmful general rules, and that malicious skills or memory entries can create persistent influence \cite{zhao2026experience, wang2026oep, chen2026dymalskill, dash2026memory}.
These results establish risks to evolving behavior; our concern is the additional amplification that occurs when the changed behavior inherits authority granted to an earlier state.

Authorization can drift along two distinct dimensions.
First, evolution can expand the effects reachable under an unchanged grant; capability-binding work independently motivates revalidation after tool-manifest changes \cite{zhou2026dynamiccapabilities}.
Second, evolution can change the authority actually needed to complete the task.
The need can increase when a new plan adds a remote service, decrease when a learned local routine replaces one, or become incomparable when alternative valid plans require different effect types.
Most sharply, it can change without any agent component or tool manifest changing at all: moving from editing and testing to pushing or deployment introduces a more consequential effect solely because the task entered a new phase.
The directional claims and the phase-only case are our authorization inference and are formalized by Proposition~\ref{prop:needmoves}; we do not attribute them to the evolving-agent literature.

This creates a three-way tension that static authorization cannot resolve.
A broad session grant preserves task completion but conflicts with least privilege and silently transfers excess authority to later versions of the principal \cite{saltzer1975protection, li2026conleash}.
A narrow immutable grant preserves the boundary but eventually prevents a useful evolving agent from completing work, or recreates repeated confirmation and its associated consent fatigue \cite{li2026conleash}.
Existing privilege-control work correspondingly treats automatic narrowing differently from policy expansion, which requires a stronger approval path \cite{shi2025progent}.
Our additional threat-model inference is that automatically tracking the agent's claimed need is not such a path: it lets mutable agent state participate in justifying its own privilege expansion (Section~\ref{sec:minimal-gate} and Appendix~\ref{sec:reach}).
Thus neither a static grant nor an adjustment controlled by the evolving agent is sufficient.
The design goal is conditional authority change beneath a boundary fixed outside the evolution process.

Neither higher competence nor successful past behavior resolves this problem.
Competence evidence says whether an agent is likely to use authority effectively; it does not establish that the agent resists injection, that a new tool preserves an old effect boundary, or that a descendant should inherit the parent's trust.
Moreover, task outcomes, tests, error rates, and execution histories are often produced by the same session that benefits from an authority increase.
They may be useful reliability signals, but treating them as the only security boundary gives the beneficiary partial control over its own privilege.

Existing authorization work provides important pieces of the solution.
Tool policies constrain calls and arguments, typed or capability-based systems restrict effect flow, skill policies bind permissions to reusable artifacts, and capability binding invalidates grants when a manifest changes \cite{shi2025progent, zhou2026lbac, pan2026skillguard, debenedetti2025camel, zhou2026dynamiccapabilities}.
These mechanisms primarily ask what a policy permits or whether a named capability remains present.
They do not by themselves answer whether an unchanged policy remains valid after memory, workflow, model, delegation, enforcement, or task-phase evolution.
The missing object is \emph{authorization continuity}: a rule for deciding when an old grant survives mutation of the authorized subject or context and how much authority may be active afterward.

We separate three questions that are often conflated:
\begin{enumerate}
    \item Did the agent or task change in a way relevant to this grant?
    \item If the grant remains valid, what authority should be active now?
    \item What authority can no runtime evidence ever exceed?
\end{enumerate}
The first is a continuity question.
The second is an allocation question.
The third is the security boundary.

Our answer separates two guards that respond to different failures.
A \emph{transition envelope} states which mutations the old grant may survive; crossing it suspends the grant even if the currently requested effect remains small.
An immutable \emph{effect ceiling} bounds all authority available under the grant; no observation generated during the session may raise it.
Within that ceiling, active authority may contract freely and expand only when specified evidence conditions hold.
This allows adaptation without forcing competence evidence to carry an adversarial security guarantee that it cannot support.
The separation yields the central design obligation:
\smallskip

\noindent\emph{A system may condition authority on demonstrated performance only if it also fixes a bound that no performance can move.}\par
\smallskip

\noindent\textbf{Contributions.}
\begin{itemize}
    \item We formulate \emph{authorization continuity} for evolving agents as a grant-relative transition-validity guard, distinct from effect containment.
    \item We define conditional authority beneath an immutable ceiling: descent is free, ascent is evidence-gated, and no runtime evidence can move the user-issued boundary.
    \item We give a non-amplification result that distinguishes requested from realized effects and states the closure-sound monitor-view obligation explicitly.
    \item We map evolution mechanisms to authorization consequences and systematize six mutation classes that make the problem empirically testable.
\end{itemize}

\section{Evolving Agents and Authorization Drift}
\label{sec:background}

\subsection{What Evolves}
\label{sec:evolution}

We model the authorized subject as a model plus its harness: instructions, planning loop, memory, tools, skills, hooks, and delegation interfaces.
This is the security principal whose evolution the Introduction motivates.
Treating only the underlying model as the principal would miss changes to the mechanisms that select, parameterize, and execute its actions.
We distinguish subject mutation from authorization-context mutation.
The former changes the model or harness; the latter changes the task phase, trust context, or enforcement conditions under which the subject acts.
Either can make an old grant stale, so the state monitored below contains both even though only the former changes the principal itself.

For authorization, the many evolution surfaces can be compressed into three distinct dimensions:
\begin{itemize}
    \item \textbf{Competence}: what the agent can infer or plan.
    \item \textbf{Affordance}: what actions the harness makes expressible, such as a file or deployment tool.
    \item \textbf{Authority}: what effects the surrounding system will actually permit on the user's behalf.
\end{itemize}
Evolution can change any of the three, but only authority determines which protected effects the runtime permits.
An increase in competence does not itself justify an increase in the security boundary.

\subsection{Scope and Non-Goals}
\label{sec:scope-main}

Our scope is delegated authority over externally observable effects while agent or task state changes.
Mutation may be benign, adversarial, or an unintended consequence of optimizing task success \cite{shao2026misevolution, wang2026oep}.
We do not solve model alignment, infer latent user intent, or require a new dialog after every prompt or model update.
We ask only whether an old grant remains usable and what effects it can bound before the next protected action.

\section{System and Threat Model}
\label{sec:threat}

\subsection{Mutable Agent State}
\label{sec:state}

Let $S_t$ denote the grant-relevant state at time $t$.
It combines the authorized subject (model identity, harness control state, persistent memory, capability configuration, and delegation) with its authorization context, including task phase and enforcement state.
We do not expose these components as separate symbols in the main text because the monitor need only classify their changes relative to a grant.
A session evolves through labeled transitions $S_t \xrightarrow{\lambda} S_{t+1}$.
A label names a security-relevant class, such as skill loading, tool addition, subagent creation, or entry into a deployment phase.

\subsection{Executions, Labels, and Effects}
\label{sec:executions}

For an action $a$, $\mathsf{Req}(a)$ is the monitor-verified set of protected effects requested at its boundary.
$\mathsf{Eff}(S,a)$ is the set actually produced when $a$ executes in state $S$.
The distinction matters because a changed tool implementation or harness may realize more than its nominal call suggests.
Effects include a type and the resources touched, for example a local file update on a repository path or a remote update to a named branch.

\subsection{Trusted Computing Base}
\label{sec:tcb}

An external monitor mediates protected effects, stores grants, and cannot be modified by the agent authority it enforces.
The model, instructions, memory, skills, generated code, tools, and subagents may be adversarial.
The monitor sees an abstraction $\hat{S}_t$, not the full state $S_t$.
The abstraction is sufficient only if the monitor-verified request remains a sound bound:
\begin{equation}
    \mathsf{Eff}(S_t,a) \subseteq \mathsf{Req}(a).
    \label{eq:request-sound}
\end{equation}
This is a semantic obligation on the monitor view and its enforcement, not an assumption that $\hat{S}_t=S_t$.
Unobserved mutation is one way to violate it.

\subsection{Attacker Model}
\label{sec:attacker}

The attacker may control ingested content, skills, remote tools, and memory, and may thereby steer the agent's behavior \cite{chen2026dymalskill, dash2026memory, wang2026oep, owasp2026skills}.
The attacker does not control the external monitor or user-issued ceilings.
We consider two successes: an effect outside all user-issued ceilings, or reuse of a grant after an incompatible transition even when effects remain inside the ceiling.
Side channels, monitor implementation defects, and information-flow confidentiality are outside scope.

\section{When Evolution Changes Authority}
\label{sec:evolution-authority}

The authority need of a task state is the family of protected-effect sets under which at least one completing plan exists.
It need not have a unique minimum because alternative plans may require incomparable resources or effect types.
Evolution changes this family when it adds or removes effectful steps, changes the available completing plans, or moves the task to a phase with different consequences.
Skills, workflows, memory, curricula, and agent compositions can all evolve in these ways \cite{wang2023voyager, hu2024adas, zhang2025dgm, gao2026selfevolvingsurvey}.
The authorization consequences below are our inference from those mechanisms rather than claims made by the cited systems.

Need may rise, fall, become incomparable, or remain unchanged.
Replacing a local test runner with one that reports to a remote service adds network transmission to every completing plan and raises a minimal requirement.
Replacing a networked check with a local analyzer can lower it.
Two alternative valid plans can instead require incomparable resources, such as access to a local secret or to a remote service.
Appendix Proposition~\ref{prop:needmoves} formalizes these directions and the phase-only case.

Most sharply, need can change without any subject mutation.
Moving from editing and testing to pushing a branch introduces a remote update while the model, harness, and capability manifest remain bit-identical.
Task phase is therefore part of the authorization context, not merely application metadata.
Conversely, a model upgrade may leave required and reachable effects unchanged while still crossing a transition envelope that binds the grant to a provider or trust class.
The appropriate response is consequently grant-relative: recompute active authority when required effects change, suspend when a continuity condition fails, and otherwise preserve the grant.

\section{A Minimal State-Bound Authorization Model}
\label{sec:minimal-model}

\subsection{Fixed Boundary, Conditional Authority}
\label{sec:minimal-bound}

A grant $G$ fixes three things at authorization time: an effect ceiling $B$, an initial active authority $P_0 \subseteq B$, and a transition envelope listing the mutation classes that the grant may survive.
The ceiling is a set of protected effects.
A structured implementation may encode that set with resource and effect-type fields while carrying budgets, goals, phases, and delegation constraints alongside it.

At time $t$, the runtime proposes $P_t^{\mathrm{prop}}$.
Contraction is always permitted.
An expansion or an incomparable change requires the grant's evidence condition.
In every case the monitor applies one clamp:
\begin{equation}
    P_t = P_t^{\mathrm{prop}} \cap B.
    \label{eq:clamp}
\end{equation}
Only a new user grant can replace $B$.
Evidence may therefore determine which part of a previously consented boundary is active, but cannot enlarge the boundary.

The transition envelope is independent of this order.
Let $\mathsf{Alive}_G(t)$ mean that every mutation label observed since grant $G$ was issued belongs to its envelope.
If it is false, the monitor suspends the grant before the next protected action, even when $P_t \subseteq B$ still holds.
This separates authorization continuity from effect containment.
Figure~\ref{fig:guards} makes the independence of the two decisions explicit.

\begin{figure}[t]
\centering
\includegraphics[width=\linewidth]{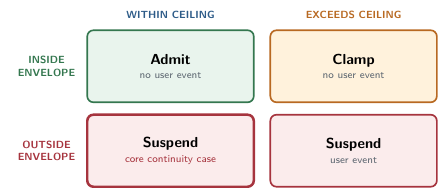}
\caption{
    The transition envelope decides whether the grant survives, while the ceiling determines how much authority may remain active.
    The highlighted cell is the paper's central continuity case: effects remain bounded, but the subject or context no longer matches the grant.
}
\label{fig:guards}
\end{figure}

\subsection{Admission and the Security Bound}
\label{sec:minimal-admission}

The monitor admits action $a_t$ only if the grant is alive, $\mathsf{Req}(a_t) \subseteq P_t$, its budget and delegation constraints hold, and any goal or phase condition succeeds.
The first three checks are mechanical.
Goal compatibility may require a conservative semantic oracle and can cause false rejection without weakening the effect bound.

\begin{theorem}[Non-amplification, compact form]
\label{thm:compact-nonamp}
Suppose protected effects are completely mediated, Equation~\ref{eq:request-sound} holds at every admission, each active or delegated authority is contained in a user-issued ceiling, and the monitor is outside the agent's control.
Then every protected effect in an execution belongs to at least one user-issued ceiling.
\end{theorem}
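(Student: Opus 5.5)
The plan is to fix an arbitrary protected effect $e$ in an execution and trace it back to an admission decision, then chain inclusions up to a ceiling. Complete mediation does the first step. Every protected effect is produced by some action $a_t$ that passed through the monitor at a time $t$, executing in state $S_t$, so $e \in \mathsf{Eff}(S_t,a_t)$. Since the monitor is outside the agent's control, the admission rule of Section~\ref{sec:minimal-admission} was actually applied as stated. Hence the grant $G$ under which $a_t$ ran satisfied $\mathsf{Alive}_G(t)$ and $\mathsf{Req}(a_t)\subseteq P_t$. Here $P_t$ is the authority the monitor holds for $G$, or for the delegated grant if $a_t$ was issued by a subagent.

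The core chain is then
\begin{equation*}
e \in \mathsf{Eff}(S_t,a_t) \subseteq \mathsf{Req}(a_t) \subseteq P_t \subseteq B_G .
\end{equation*}
The first inclusion is Equation~\ref{eq:request-sound}, assumed at every admission. The second is the admission check. The third is the hypothesis that every active or delegated authority lies within a user-issued ceiling. For a root grant this also follows directly from the clamp in Equation~\ref{eq:clamp}, whatever $P_t^{\mathrm{prop}}$ the runtime proposed and whatever evidence justified it. Evidence only affects $P_t^{\mathrm{prop}}$, and the clamp is applied afterwards, so no evidence can move $B_G$. Since $B_G$ can be replaced only by a new user grant, it is user-issued, and $e$ lies in at least one user-issued ceiling.

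The step I expect to need the most care is delegation, together with the identity of the ceiling over time. For delegation, I would argue by induction on the depth of the delegation tree. A child's authority is contained in its parent's active authority, and hence in a ceiling the user issued at the root, which is exactly the hypothesis ``each active or delegated authority is contained in a user-issued ceiling.'' The induction only makes sure that authority acquired through an arbitrarily long chain still falls under that hypothesis, so subagents cannot mint ceilings. For time, I would note that $B_G$ is immutable for the life of $G$, and that the conclusion asks only for some user-issued ceiling rather than the original one. Grant replacement by the user is therefore harmless. Finally, effects realized beyond what was requested, such as a mutated tool implementation, are covered by Equation~\ref{eq:request-sound}, not by the envelope. $\mathsf{Alive}_G$ plays no role in the containment bound; it matters only for the second attacker success, so the proof should not rely on it.
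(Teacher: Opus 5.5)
Your proof is correct and follows the paper's argument. Complete mediation reduces every protected effect to an admitted action, and the chain $\mathsf{Eff}(S_t,a_t) \subseteq \mathsf{Req}(a_t) \subseteq P_t \subseteq B$ comes from request soundness, the admission check, and the assumed containment invariant, which also covers delegated authority; the induction on delegation depth is unnecessary because the compact statement assumes containment outright (deriving it from attenuation is what Theorem~\ref{thm:nonamp} does via (A4)), and you are right that $\mathsf{Alive}_G$ plays no role in the bound.
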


\begin{proof}
For an admitted action under grant $G$, request soundness and admission give
\begin{equation}
    \mathsf{Eff}(S_t,a_t) \subseteq \mathsf{Req}(a_t)
    \subseteq P_t \subseteq B.
\end{equation}
Complete mediation applies this chain to every protected effect.
The assumed containment invariant applies equally to delegated authority.
\end{proof}

This compact statement takes containment of active and delegated authority as an abstract invariant.
Appendix Theorem~\ref{thm:nonamp} gives its structured counterpart: the fixed clamp, finite delegation ancestry, attenuating-delegation assumption, and scope monotonicity together imply containment in a user-issued root ceiling, while closure soundness makes the monitor-view obligation explicit.
The detailed result therefore refines the compact argument rather than merely renaming its sets.

The theorem is deliberately a boundary result.
It does not say that the current principal is acceptable, that the action serves the user's intent, or that information cannot flow between separately permitted effects.
Those claims require the transition envelope, semantic checks, and information-flow control respectively.

\subsection{What Conditional Ascent Can Claim}
\label{sec:minimal-gate}

A test pass, low error rate, or successful task history may be useful evidence that more of $B$ should be active.
If the agent can produce the qualifying evidence itself, however, the security guarantee remains $B$.
The evidence rule then improves expected exposure or reliability, not the adversarial upper bound.
A tighter security bound requires at least one premise that the session cannot realize under its current authority, such as an external attestation, an independently controlled evaluator, or a fresh human countersignature.
Appendix~\ref{sec:reach} gives a history-independent least-closure construction for a sound gate-dependent permission bound.
It is an over-approximation and is not claimed to enumerate effects that some compatible execution must realize.

\section{Authorization-Relevant Mutations}
\label{sec:taxonomy}

The model still needs an operational answer to the first question: which changes should a monitor observe?
Table~\ref{tab:mutations} organizes six mutation classes by why an earlier grant may become stale.
They are not a partition.
A memory update that becomes a reusable instruction, for example, is both a trust-context change and a control-state change.
The classification is a sensitivity checklist: it identifies what must be re-evaluated, while the grant determines whether the response is continuation, contraction, suspension, or a new authorization request.
In Table~\ref{tab:mutations}, ``any direction'' abbreviates rise, fall, an incomparable change, or no change.

\begin{table*}[t]
\centering
\footnotesize
\caption{
    Grant-relevant subject and context mutations.
    A mutation does not automatically require reauthorization; the response is relative to the grant's transition envelope and effect ceiling.
}
\label{tab:mutations}
\begin{tabularx}{\textwidth}{p{0.10\textwidth} p{0.20\textwidth} Y p{0.13\textwidth} p{0.20\textwidth}}
\toprule
Class & Representative changes & Why the old grant may be stale & Possible need change & Typical response \\
\midrule
Control state & Model or provider switch; instruction, skill, hook, harness, or persistent-memory update & The policy selecting actions, interpreting intent, or handling failures has changed & Any direction & Re-measure the subject; continue only within an approved transition class \\
\addlinespace
Capability & New tool or service; network enabled; broader filesystem roots; new credential; changed tool implementation & Required or reachable protected effects may have changed even when the task name has not & Any direction & Recompute active authority and its effect bound; contract or request a scoped grant \\
\addlinespace
Delegation & New subagent or remote planner; changed child model or prompt; recursive delegation; inherited execution environment & A new subject may exercise ambient authority that the user never evaluated & Rise or redistribute & Attenuate child authority and bind it to the delegated subtask \\
\addlinespace
Task or phase & Inspect to edit; test to commit; local work to push or deploy; draft to send; browse to transact & The consequences and irreversibility of the next action have changed & Any direction & Use phase-bound active authority and explicit transition gates \\
\addlinespace
Trust context & Repository, webpage, email, tool output, or memory becomes persistent control input & Behavior may be redirected without a visible model or tool change & Often nominally unchanged & Track provenance; restrict persistence; review before high-impact effects \\
\addlinespace
Enforcement & Approval policy, sandbox, verifier, logging, commit gate, or rollback mechanism changes & The mechanism that made the original grant safe has weakened or become agent-controlled & May stay unchanged & Fail closed and require an external update path \\
\bottomrule
\end{tabularx}
\end{table*}

\paragraph{Control state and trust context.}
Self-evolution makes control-state mutation central.
Experience can be distilled into persistent rules or skills, and safety can drift through ordinary memory accumulation as well as adversarial manipulation \cite{zhao2026experience, lin2026selfevolving, shao2026misevolution, wang2026oep}.
The relevant boundary is not whether bytes changed, but whether the changed component can influence a protected action under the current grant.
Content that informs one local answer may be irrelevant; content persisted into memory, a skill, or a future plan changes the control policy that will reuse the grant \cite{chen2026dymalskill, dash2026memory}.

\paragraph{Capability and delegation.}
A manifest change is useful evidence that effect reachability may have changed, but equality of manifests is neither sufficient nor necessary for continuity \cite{zhou2026dynamiccapabilities}.
An unchanged wrapper may call a different remote implementation, while a benign version update may remain inside a tightly mediated effect set.
Delegation creates a separate principal problem: authority to create a child is not authority for the child to inherit the parent's full ceiling.
Child authority must therefore be attenuated and attributable to a parent grant.

\paragraph{Task phase and enforcement.}
Phase change is the clearest case that capability-only monitoring misses.
Moving from local testing to pushing a branch can change required effects while every component of the agent and tool inventory remains unchanged.
Enforcement mutation is different again: if ordinary task authority can weaken the monitor, approval policy, or audit path, the premises of Theorem~\ref{thm:compact-nonamp} fail directly.
Such changes require a higher-authority path outside the agent's current ceiling.

\section{Positioning and Related Work}
\label{sec:existing}

Existing approaches differ mainly in what they bind: a session, a tool call, a generated program, a skill, a capability manifest, or an autonomy stage.
These controls remain valuable, but mutation of the authorized subject or context cuts across their binding objects.
Our additional question is temporal: after the model, harness, memory, delegation graph, or task phase changes, is the old authorization still valid for the system now acting?

Table~\ref{tab:endpoints} shows how familiar policies arise by choosing different continuity rules and different treatment of the ceiling.
The two common extremes each collapse a task-dependent decision: session-wide access never revisits continuity, while per-action confirmation revisits it after every step.

\begin{table*}[t]
\centering
\footnotesize
\caption{
    Existing authorization policies as choices of continuity rule and authority boundary.
}
\label{tab:endpoints}
\begin{tabularx}{\textwidth}{p{0.22\textwidth} p{0.27\textwidth} Y}
\toprule
Policy & Continuity and boundary & Consequence \\
\midrule
Session-wide full access & Every transition is accepted and the ceiling is unrestricted & No mutation invalidates the grant; non-amplification becomes vacuous \\
\addlinespace
Per-action confirmation & A fresh decision is required for each protected action & Strong continuity through repeated user interaction, with maximum prompt cost \\
\addlinespace
Exact state-hash binding & Any byte-level state change invalidates the grant & Detects all changes but rejects many benign ones \\
\addlinespace
Tool-manifest binding & Capability changes invalidate; other mutations survive & Misses phase, control-state, delegation, and enforcement changes \\
\addlinespace
Earned-autonomy ladder & Performance evidence may raise the terminal boundary & No fixed ceiling remains unless the ladder terminates below one \\
\addlinespace
State-bound authorization & The transition envelope is task-specific and $B$ is immutable & Continuity and active authority can vary without allowing runtime evidence to move the boundary \\
\bottomrule
\end{tabularx}
\end{table*}

\paragraph{Earned autonomy.}
Practitioner and governance frameworks commonly expand autonomy after demonstrated performance \cite{aws2026principles, knowbe4govern, woodruff2026atf, reavis2026autonomy}.
Our objection is not to gradual allocation beneath a limit.
It is to allowing performance evidence to raise the limit itself.
Accuracy, task success, or an incident record produced inside the session may support a reliability decision, but a security claim below $B$ additionally needs an external premise such as an independent audit, authorized countersignature, or evaluation the agent cannot run on itself.
The ladder must also terminate beneath a boundary that no rung can move.

\paragraph{Tool and consent policies.}
Progent constrains tool calls and allows silent policy narrowing while requiring approval for expansion \cite{shi2025progent}.
ConLeash uses lattice refinement to reduce both durable over-granting and prompt fatigue \cite{li2026conleash}, while AgenticRei adds runtime obligations and dispensations enforced outside the model \cite{joshi2026deontic}.
These systems govern the permitted action space or the evolution of policy.
State-bound authorization addresses the complementary case in which the policy is unchanged but the subject or authorization context has changed.
Their policy languages can instantiate the effect ceiling $B$ and active authority $P_t$.

\paragraph{Programs, skills, and capability flow.}
LBAC constrains generated programs through types and runtime enforcement \cite{zhou2026lbac}; SkillGuard treats skills as permission-bearing artifacts \cite{pan2026skillguard}; and CaMeL tracks trusted control and data flow before tool calls \cite{debenedetti2025camel}.
These mechanisms are natural implementations of the sound request abstraction required by Equation~\ref{eq:request-sound}.
Capability binding separately invalidates certificates when tool manifests change \cite{zhou2026dynamiccapabilities}.
Our transition envelope generalizes the invalidation question to control state, memory, phase, delegation, and enforcement changes that may leave the manifest untouched.

\paragraph{Self-evolving agents.}
The self-evolution literature explains what changes and how those changes persist across sessions or descendants \cite{gao2026selfevolvingsurvey, lin2026selfevolving, zhao2026experience}.
We supply the authorization interpretation: which changes invalidate an old grant, which change the authority currently needed, and which must remain unable to move the user-issued ceiling.

\paragraph{Classical foundations.}
The design follows least privilege, complete mediation, and the reference-monitor principle \cite{saltzer1975protection, anderson1972referenceMonitor}.
Reusing a decision after subject or context mutation is a temporal relative of time-of-check/time-of-use, except that the changing object is the authorization setting rather than the protected resource.
It also resembles the confused deputy when a privileged component acts on instructions from an untrusted source \cite{hardy1988confusedDeputy}.

\section{Evaluation Path and Limitations}
\label{sec:evaluation}

The present paper formalizes authorization drift but does not yet measure its frequency or operational cost.
We therefore present it as a formalization rather than as an empirical claim about deployed-system prevalence.
A first empirical study should replay tasks across controlled mutations and record: whether an old grant persists, whether a post-mutation action escapes its original effect set, whether benign changes cause unnecessary suspension, and how often users must reauthorize.
These correspond to grant persistence rate, post-mutation unauthorized-effect rate, false invalidation rate, and interaction cost.

Three limitations are central.
First, choosing $B$ remains a policy and usability problem: a high ceiling weakens the result, while a low ceiling interrupts useful work.
Second, Equation~\ref{eq:request-sound} needs a concrete enforcement mechanism such as syscall mediation, tool proxies, type-and-effect analysis, or capability-flow tracking.
Label delivery alone is insufficient if the resulting monitor view under-approximates actual effects.
Third, cross-generation revocation and population-level authority are not solved here.

\section{Conclusion}
\label{sec:conclusion-main}

An evolving agent should not inherit a static grant merely because its session identity persists, and an unchanged agent should not carry one across an unanticipated task or enforcement transition.
Its active authority may need to change, but the change must be conditional and bounded.
State-bound authorization supplies two independent answers: a transition envelope decides whether the old grant survives, and a fixed effect ceiling bounds all authority that can become active beneath it.
This distinction turns ``earned autonomy'' into a usable allocation policy without mistaking agent-produced evidence for a security boundary.

\bibliographystyle{IEEEtran}
\bibliography{ref}

\appendix

\section{Detailed Structured Model}
\label{sec:model}

This appendix instantiates the effect-set model with structured resources, phases, budgets, mutation labels, and delegation constraints.
The correspondence with the compact model is explicit:
\begin{equation}
    \begin{aligned}
        B &:= \mathsf{Scope}(L_{\max}),
        & P_t &:= \mathsf{Scope}(L_t), \\
        P_t^{\mathrm{prop}} &:= \mathsf{Scope}(L_t^{\mathrm{prop}}).
    \end{aligned}
    \label{eq:model-projection}
\end{equation}
This projection deliberately forgets goal, phase, budget, and delegation-template fields; the structured validity predicate restores those checks separately.
The compact predicate $\mathsf{Alive}_G(t)$ corresponds to path compatibility under the grant-fixed envelope $\mathcal{M}$.
Theorem~\ref{thm:compact-nonamp} assumes request soundness and root-ceiling containment directly, whereas Theorem~\ref{thm:nonamp} derives root containment through structured lease ancestry and states closure soundness as the operational monitor-view obligation.
They establish the same ceiling-level boundary through different effect-soundness interfaces; the latter is a structured counterpart, not merely a change of notation.

For this instantiation, let $\Lambda$ be the mutation-label alphabet, $\mathcal{E}$ the protected-effect universe, and write the full security-relevant state as
\begin{equation}
    S_t = \langle m_t,h_t,k_t,c_t,d_t,q_t,e_t \rangle,
\end{equation}
covering the model, harness, persistent knowledge, capability configuration, delegation graph, task phase, and enforcement state.
The first five components describe the authorized subject, while $q_t$ and $e_t$ describe its authorization context.
We define leases and their order (Section~\ref{sec:leases}) and the effects a lease is meant to bound (Section~\ref{sec:scope}), establish that the authority a task step requires is not invariant under mutation (Section~\ref{sec:need}), fix a ceiling with evidence-gated movement beneath it (Section~\ref{sec:ceiling}), and characterize how much authority such a gate can concede (Section~\ref{sec:reach}).
We then give mutation envelopes as sets of labeled transitions (Section~\ref{sec:envelope}) and a validity predicate that separates mechanically checkable conditions from semantic ones (Section~\ref{sec:validity}), state authorization continuity, prove a non-amplification theorem, and give a construction showing that one of its assumptions cannot be dropped.

\subsection{Leases and the Authority Order}
\label{sec:leases}

\begin{definition}[Grant context and lease]
A grant fixes the context
\begin{equation}
    C_G = \langle \mu_0, \mathcal{M}, \mathcal{D} \rangle,
\end{equation}
where $\mu_0 = \mathsf{Measure}(S_0)$ is the grant-time measurement, $\mathcal{M} \subseteq \Lambda$ is the mutation envelope, and $\mathcal{D}$ contains the admissible child-lease templates.
Within one context, an authority lease is
\begin{equation}
    L = \langle C_G; g, p, R, E, \tau \rangle.
\end{equation}
Here $g$ is the goal, $p$ is the phase, $R$ is the resource set, $E \subseteq \mathcal{T}$ is the set of effect types, and $\tau \in \mathbb{N} \cup \{\infty\}$ is a protected-action budget.
\end{definition}

We fix $\tau$ as a usage budget rather than a wall-clock deadline so that a single scalar carries the lifecycle bound and the meet of Proposition~\ref{prop:semilattice} is $\min$.
A deployment wanting both can carry a pair ordered componentwise without affecting any result below.
Freshness is the corresponding predicate: $\mathsf{Fresh}(L,t)$ holds iff $L$ has not been revoked or suspended before $t$ and fewer than $\tau$ protected actions have been admitted under $L$ since it was granted.

To compare leases we need an order. Let $\sqsubseteq_g$ be a refinement preorder on goals and phases, under which $g' \sqsubseteq_g g$ means that $g'$ is a sub-goal of $g$. We assume $\sqsubseteq_g$ has binary meets; this is a modeling assumption, and in practice it is discharged by drawing goals and phases from a fixed task ontology rather than from free text.

\begin{definition}[Authority order]
For leases in the same grant context $C_G$, $L' \preceq L$ iff $g' \sqsubseteq_g g$, $p' \sqsubseteq_g p$, $R' \subseteq R$, $E' \subseteq E$, and $\tau' \leq \tau$.
Leases from different grant contexts are not compared.
\end{definition}

\begin{proposition}[Semilattice]
\label{prop:semilattice}
For each fixed context $C_G$, the leases ordered by $\preceq$ form a meet-semilattice with componentwise meet
\begin{equation}
    L_1 \sqcap L_2 = \langle C_G;
    g_1 \sqcap g_2, p_1 \sqcap p_2, R_1 \cap R_2,
    E_1 \cap E_2, \min(\tau_1,\tau_2) \rangle,
\end{equation}
\end{proposition}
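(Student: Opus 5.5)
The plan is to show that the componentwise meet is the greatest lower bound of $L_1$ and $L_2$ under $\preceq$. Because $\sqsubseteq_g$ is only a preorder, I will read ``meet'' throughout up to the equivalence $L \preceq L' \wedge L' \preceq L$, and likewise for goal and phase meets. The argument works inside one fixed context $C_G$. Leases from different contexts are never compared, so they play no role.

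First I check that $\preceq$ is a preorder on leases in $C_G$. Reflexivity and transitivity hold component by component: for $\sqsubseteq_g$ they are assumed, for $\subseteq$ on $R$ and $E$ they are standard, and for $\le$ on $\mathbb{N}\cup\{\infty\}$ they are standard. Hence the conjunction defining $\preceq$ inherits both properties. If a genuine partial order is wanted, I pass to the quotient by mutual refinement; the meet defined below respects this equivalence.

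Next I check that $L_1\sqcap L_2$ is a well-formed lease in $C_G$. The goal and phase components $g_1\sqcap g_2$ and $p_1\sqcap p_2$ exist by the modeling assumption that $\sqsubseteq_g$ has binary meets. Since $E_1\cap E_2 \subseteq \mathcal{T}$, the effect-type component is admissible. Since $\min(\tau_1,\tau_2)\in\mathbb{N}\cup\{\infty\}$ with $\min(n,\infty)=n$, the budget is admissible.

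Then I show it is a lower bound. We have $g_1\sqcap g_2\sqsubseteq_g g_i$ and $p_1\sqcap p_2 \sqsubseteq_g p_i$ by the meet property, $R_1\cap R_2\subseteq R_i$, $E_1\cap E_2\subseteq E_i$, and $\min(\tau_1,\tau_2)\le\tau_i$. Together these give $L_1\sqcap L_2\preceq L_i$ for $i=1,2$.

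For greatestness, take any $L'$ in $C_G$ with $L'\preceq L_1$ and $L'\preceq L_2$. Componentwise:
\begin{itemize}
\item $g'\sqsubseteq_g g_1$ and $g'\sqsubseteq_g g_2$ give $g'\sqsubseteq_g g_1\sqcap g_2$ by the universal property of the assumed meet, and the same holds for phases.
\item $R'\subseteq R_1\cap R_2$ and $E'\subseteq E_1\cap E_2$.
\item $\tau'\le\min(\tau_1,\tau_2)$.
\end{itemize}
Hence $L'\preceq L_1\sqcap L_2$.

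I expect no substantive obstacle, because every component order is a meet-semilattice: either by assumption ($\sqsubseteq_g$), by the powerset lattice ($R$ and $E$), or as a total order ($\tau$). The only delicate step is the bookkeeping. I must state that the context component is held equal rather than met, and that uniqueness of the meet holds only up to $\sqsubseteq_g$-equivalence unless one quotients. I would address this in one sentence at the start.
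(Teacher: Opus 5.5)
Your proof is correct and takes the same approach as the paper, which simply invokes the product construction on the five authority-bearing fields with $C_G$ held fixed. You spell that construction out explicitly: powerset intersection for $R$ and $E$, $\min$ on the totally ordered budget, and assumed binary meets for goals and phases. You also note, as the paper does not, that $\sqsubseteq_g$ is only a preorder, so the meet is unique only up to mutual refinement or after passing to the quotient.
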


The result is the product construction on the five authority-bearing fields.
The grant measurement, mutation envelope, and delegation templates are excluded from the order because they determine validity, not amount of authority.
This separation avoids treating incomparable measurements as if they had an authority meet.
Proposition~\ref{prop:semilattice} gives the delegation constraint of Section~\ref{sec:taxonomy} its meaning for a parent and child governed by the same context.

\subsection{Scope, the Authority Ceiling, and Effective Authority}
\label{sec:scope}

A lease names permitted resources and effect types; the corresponding set of effects is its scope.

\begin{definition}[Scope]
\label{def:scope}
$\mathsf{Scope}(L) = \{\epsilon \in \mathcal{E} : \mathsf{type}(\epsilon) \in E \wedge \mathsf{res}(\epsilon) \subseteq R\}$.
\end{definition}

For leases in the same grant context, Definition~\ref{def:scope} and Proposition~\ref{prop:semilattice} give
\begin{equation}
    \mathsf{Scope}(L_1 \sqcap L_2)
    = \mathsf{Scope}(L_1) \cap \mathsf{Scope}(L_2).
    \label{eq:scope-meet}
\end{equation}
Consequently, the structured clamp projects exactly to Equation~\ref{eq:clamp} under Equation~\ref{eq:model-projection}.

\begin{proposition}[Monotonicity]
\label{prop:monotone}
If $L' \preceq L$ then $\mathsf{Scope}(L') \subseteq \mathsf{Scope}(L)$.
\end{proposition}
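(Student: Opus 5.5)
The proof is a direct unfolding of the authority order and the scope definition. Suppose $L' \preceq L$ with $L' = \langle C_G; g', p', R', E', \tau' \rangle$ and $L = \langle C_G; g, p, R, E, \tau \rangle$, both in the same grant context; the order is only defined there. By the definition of the authority order this gives, among other things, $R' \subseteq R$ and $E' \subseteq E$. Only these two components matter, because Definition~\ref{def:scope} mentions nothing but the effect-type set and the resource set. The goal, phase and budget components, and the context $C_G$, play no role in $\mathsf{Scope}$ and can be discarded.

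Take an arbitrary $\epsilon \in \mathsf{Scope}(L')$. By Definition~\ref{def:scope} this means $\mathsf{type}(\epsilon) \in E'$ and $\mathsf{res}(\epsilon) \subseteq R'$. From $E' \subseteq E$ we get $\mathsf{type}(\epsilon) \in E$. Transitivity of inclusion then gives $\mathsf{res}(\epsilon) \subseteq R' \subseteq R$. Both conditions defining membership in $\mathsf{Scope}(L)$ hold, so $\epsilon \in \mathsf{Scope}(L)$. Since $\epsilon$ was arbitrary, $\mathsf{Scope}(L') \subseteq \mathsf{Scope}(L)$.

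There is no real obstacle here. The only point to state explicitly is that the monotonicity is with respect to the resource and type fields alone. This is why projecting leases to effect sets in Equation~\ref{eq:model-projection} is order-preserving, even though that projection forgets the goal, phase and budget fields. An alternative route would derive the claim from Equation~\ref{eq:scope-meet}: $L' \preceq L$ implies $L' \sqcap L = L'$ up to the preorder on goals, so $\mathsf{Scope}(L') = \mathsf{Scope}(L') \cap \mathsf{Scope}(L)$. That route needs care because $\sqsubseteq_g$ is only a preorder, so the elementwise argument above is the one to use.
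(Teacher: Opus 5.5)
Your proof is correct and takes the same approach as the paper, which notes that the claim follows directly from $R' \subseteq R$ and $E' \subseteq E$. Your elementwise unfolding of the scope definition spells out that one-line argument.
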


This follows directly from $R' \subseteq R$ and $E' \subseteq E$.
Monotonicity is the property that makes attenuating delegation useful: a child's declared scope cannot exceed its parent's, while the runtime assumptions below connect that declared scope to realized effects.

Monotonicity is also what licenses the way we speak about the model for the rest of this section. Because $\preceq$ implies containment of scopes, every object we define is a region of $\mathcal{E}$ and every relation between them is set inclusion: the ceiling is an outer boundary, the active lease is a region inside it, the clamp is intersection with the ceiling, and the reachable authority of Section~\ref{sec:reach} is a union of regions.

\subsubsection{Why authority need moves}
\label{sec:need}

Before fixing a boundary we should say what it bounds, and in particular whether the authority an agent requires is stable across a session at all.
It is not.
The reason is worth separating from the security argument, because it is the premise on which the rest of the model rests.

\begin{definition}[Authority need]
\label{def:need}
Let $\mathsf{Plan}(S_t,q_t)$ be the set of action sequences that complete the current task step $q_t$ from state $S_t$.
The \emph{need} at $t$ is
\begin{equation}
    \begin{split}
        \mathsf{Need}(S_t,q_t) \;=\; \{\, X \subseteq \mathcal{E} \;:\; &\exists\, \alpha \in \mathsf{Plan}(S_t,q_t), \\
        &\forall a \in \alpha,\ \mathsf{Req}(a) \subseteq X \,\},
    \end{split}
\end{equation}
the family of effect bounds under which some completing plan is admissible.
\end{definition}

$\mathsf{Need}$ is upward closed under set inclusion, since enlarging an effect bound never removes a plan.
It need not have a least element, because two incomparable effect sets may each admit a different completing plan, which is why we define need as a family rather than as a single minimum.
The compact proposal $P_t^{\mathrm{prop}}$ may select a minimal member; a structured proposal $L_t^{\mathrm{prop}}$ realizes that choice when its scope contains the selected effect set.

\begin{proposition}[Need is not invariant under mutation]
\label{prop:needmoves}
There are transitions $S_t \xrightarrow{\lambda} S_{t+1}$ with $q_{t+1} = q_t$ for which $\mathsf{Need}(S_{t+1},q_{t+1}) \neq \mathsf{Need}(S_t,q_t)$.
Across such transitions, a minimal effect bound can rise, fall, or be replaced by an incomparable one.
There are also phase transitions in which every state component except $q$ remains unchanged and the need changes.
\end{proposition}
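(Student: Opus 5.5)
The statement is existential, so I will prove it by explicit construction, exhibiting small concrete witness states for each clause. I fix a protected-effect universe $\mathcal{E}$ with three effects: a local file update $\epsilon_{\mathrm{loc}}$, a network transmission $\epsilon_{\mathrm{net}}$, and a secret read $\epsilon_{\mathrm{sec}}$. I add a remote branch update $\epsilon_{\mathrm{push}}$ for the phase case. The task step $q$ is ``run the tests.''

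The computational tool is one observation about Definition~\ref{def:need}. If $\mathsf{Plan}(S,q)=\{\alpha_1,\dots,\alpha_k\}$ and $U_i=\bigcup_{a\in\alpha_i}\mathsf{Req}(a)$, then $\mathsf{Need}(S,q)=\{X : \exists i,\ U_i\subseteq X\}$. This is the upward closure of $\{U_1,\dots,U_k\}$, and its minimal elements are the $\subseteq$-minimal $U_i$. I will therefore compare needs by comparing these generating sets. Two upward closures are equal iff they have the same minimal elements, which holds at least in the finite setting I use.

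\textbf{Rise.} Let $S_t$ contain a local test runner, so its unique completing plan has $U=\{\epsilon_{\mathrm{loc}}\}$. Take $\lambda$ to be a skill-replacement label producing $S_{t+1}$, identical except that the runner now reports to a remote service. Every completing plan then requests $\{\epsilon_{\mathrm{loc}},\epsilon_{\mathrm{net}}\}$. The set $\{\epsilon_{\mathrm{loc}}\}$ lies in $\mathsf{Need}(S_t,q)$ but not in $\mathsf{Need}(S_{t+1},q)$, so the needs differ and the unique minimal bound strictly rises.

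\textbf{Fall.} Reversing the same transition (a networked check replaced by a local analyzer) gives the fall case.

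\textbf{Incomparable.} Take $S_t$ whose only plan needs $\{\epsilon_{\mathrm{loc}},\epsilon_{\mathrm{sec}}\}$, and let a tool-addition or replacement yield $S_{t+1}$ whose only plan needs $\{\epsilon_{\mathrm{loc}},\epsilon_{\mathrm{net}}\}$. The two minimal bounds are incomparable. This can also be strengthened to the two-plan version described in Section~\ref{sec:evolution-authority}: when both plans are present, the need has two incomparable minimal elements and no least element. In all three cases $q_{t+1}=q_t$ holds by construction.

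\textbf{Phase-only case.} I keep $m,h,k,c,d,e$ identical and change only $q$ from ``test'' to ``push branch.'' In the test phase the plan needs $\{\epsilon_{\mathrm{loc}}\}$. In the push phase every completing plan must include a push action with $\epsilon_{\mathrm{push}}\in\mathsf{Req}$. So $\{\epsilon_{\mathrm{loc}}\}$ is in the first need but not the second.

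\textbf{Main obstacle.} The difficulty is not the combinatorics but making the witnesses legitimate within the model. $\mathsf{Plan}$ and $\mathsf{Req}$ are abstract, so I must stipulate them consistently. In particular, the phase clause requires $\mathsf{Req}(a)$ to be state-independent, and it must be the plan set, not the request map, that varies with $q$. Also, ``every completing plan'' must genuinely exclude plans that bypass the push.

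I would handle this by declaring a toy instantiation where $\mathsf{Plan}(S,q)$ is given by a finite table indexed by the capability configuration and the phase. The claims then reduce to the upward-closure comparison above. Because the proposition is existential, such a stipulated instance suffices, as long as it is consistent with Definition~\ref{def:need} and with the labeled-transition semantics of Section~\ref{sec:state}.
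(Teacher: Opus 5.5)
Your construction is correct and matches the paper's own witnesses essentially one-for-one: a local runner replaced by a remote-reporting one for the rise, the reverse substitution for the fall, a credential-read plan replaced by a remote-service plan for the incomparable case, and a test-to-push phase change with all other state components fixed. Your reduction of $\mathsf{Need}$ to the upward closure of the per-plan request unions makes explicit the comparisons the paper states informally. The two-plan aside is not needed, and it is not a strengthening, since the original minimal bound would remain minimal rather than being replaced.
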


\begin{proof}[Construction]
For a capability transition, let $q$ be the unit-testing step.
At $S_t$ the constructed system has a single completing plan whose protected effect is local test execution, so the corresponding singleton is a minimal member of $\mathsf{Need}(S_t,q)$.
Now replace that runner with one that also reports to a remote collector, leaving no local-only completing plan.
Every completing plan then requires outbound network transmission in addition to local execution.
The upward-closed family $\mathsf{Need}$ shrinks, while its minimal effect bound rises.
Conversely, replacing a mandatory networked check with a local analyzer enlarges $\mathsf{Need}$ and lowers a minimal requirement.
For an incomparable change, let the only initial plan require reading a local credential and let the only post-mutation plan instead require invoking a remote service without that read.
The two minimal effect bounds contain different effect types, so neither contains the other.

For a phase transition, let $q$ move from editing and testing to pushing a branch, with every other component of $S$ unchanged.
No model, harness, or capability component changes, no tool is added or removed, and the capability manifest is bit-identical.
Every completing plan for the new step requires a remote reference update, so every member of $\mathsf{Need}$ must contain that effect and the family changes.
\end{proof}

Both witnesses are instantiated in a coding setting for concreteness, but neither uses anything specific to it.
The first needs only that some transition changes which effect types a completing plan requires; the second needs only a task with an irreversible stage that the tool inventory does not distinguish, which is the ordinary situation for an assistant that drafts before sending or an operator that diagnoses before remediating.

Proposition~\ref{prop:needmoves} answers the question this paper opens with, and answers it in the way that creates the difficulty rather than resolving it.
Authority need does move when the subject or authorization context changes, so a grant fixed as a single point is either too small to finish the task or too large for most of it.
But need is computed from the agent's own state and plan, and an agent whose state has been mutated by an adversary has a mutated need; a system that simply tracked need would be granting authority on the attacker's say-so.
The rest of this section is the consequence.
Authority may follow need, but only beneath a boundary that need cannot move, and the second construction shows why that boundary cannot be indexed on the capability manifest: it is exactly the case in which nothing observable in the manifest changes.

\subsubsection{The authority ceiling}
\label{sec:ceiling}

A single fixed lease is not what a user issuing a broad grant intends. The intent is better captured as: \emph{at grant time we fix a boundary and a tolerance for movement inside it; as the task proceeds and the agent's instructions, tools, skills, and ingested context all change, the authority actually in force changes with them, but it must never exceed the boundary.} We therefore let a grant issue an interval rather than a point.

\begin{definition}[Authority ceiling and gated ascent]
\label{def:band}
A grant fixes a context $C_G$, a \emph{ceiling} $L_{\max}$ in that context, an initial active lease $L_0 \preceq L_{\max}$, and an evidence relation $\vdash$.
Every active lease $L_t$ remains in $C_G$ and satisfies $L_t \preceq L_{\max}$.
The ceiling is \emph{fixed}: no runtime signal whatsoever, including evidence about the agent's competence or track record, may raise it, and raising it requires a fresh user grant.
Below the ceiling, movement is asymmetric.
Descent is free: a proposal $L_t^{\mathrm{prop}} \preceq L_{t-1}$ is admitted unconditionally.
Every other proposal counts as \emph{ascent}, including one merely incomparable to $L_{t-1}$, and is \emph{gated}: it is admitted only if $\mathsf{Evid}(t) \vdash L_t^{\mathrm{prop}}$ holds, where $\mathsf{Evid}(t)$ is the evidence derivable from the execution prefix up to $t$.
In both cases the result is clamped,
\begin{equation}
    L_t \;=\; L_t^{\mathrm{prop}} \sqcap L_{\max},
\end{equation}
which is well defined by Proposition~\ref{prop:semilattice} and requires only the meet.
No join is needed, because ascent is expressed as a proposal rather than as a combination of the current lease with an increment; a system that merged two grants would additionally require a lattice.

Float acts only on the authority components $(g,p,R,E,\tau)$.
The context $C_G$, including $\mathcal{M}$ and $\mathcal{D}$, is grant-fixed and is not an operand of the clamp.
\end{definition}

\begin{figure*}[t]
\centering
\includegraphics[width=0.94\textwidth]{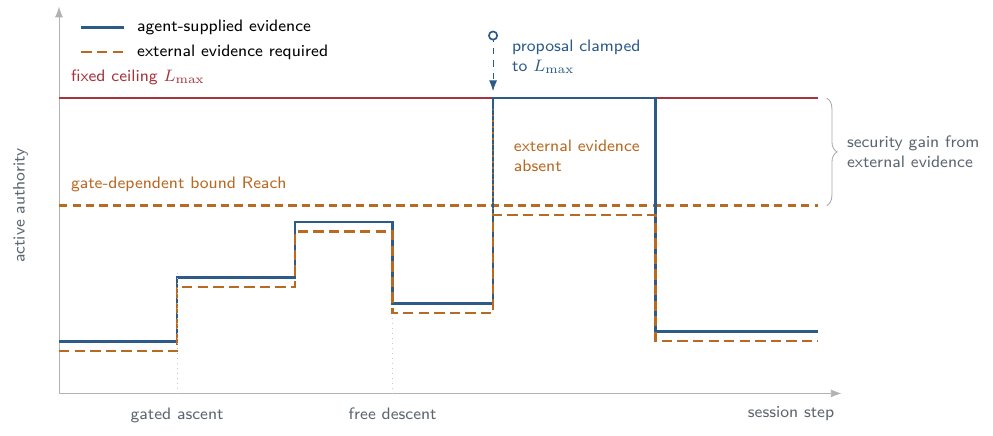}
\caption{
    Authority beneath a fixed ceiling.
    The solid line is the active lease $L_t$ under an unconditionally realizable evidence relation (Proposition~\ref{prop:bandtop}); it ascends on evidence, descends freely under decay, and is clamped when a proposal would exceed the ceiling.
    The dashed line illustrates a trajectory under a relation with a premise outside the agent's causal reach; $\mathsf{Reach}$ is a sound history-independent bound on such trajectories (Theorem~\ref{thm:reach}, Proposition~\ref{prop:tighten}).
    The gap between the horizontal bounds is what the analysis can certify for the gate, not a claim that every point is attainable.
    The envelope guard is orthogonal to all of this and suspends the grant outright.
}
\label{fig:float}
\end{figure*}

This is the paper's central structure, and it is a layering rather than a single mechanism. The ceiling is a security boundary and holds against an adversary. The evidence relation $\vdash$ governs position beneath that boundary and is a reliability mechanism: it exists to keep the agent from being handed authority it will mishandle, not to keep an attacker from acquiring authority. Section~\ref{sec:noadv} shows why this division lets $\vdash$ be far weaker than it would otherwise have to be.

\subsubsection{A gate-dependent permission bound}
\label{sec:reach}

Gated ascent is easy to misread in either direction.
Read one way, the gate is a security control that keeps authority below the ceiling; that reading fails whenever the agent can manufacture the evidence.
Read the other way, the gate is decoration; that reading fails too, and the difference between them turns on a single property of $\vdash$ that we now isolate.

Evidence does not arrive from nowhere.
$\mathsf{Evid}(t)$ is a function of the execution prefix, and the agent's contribution to that prefix is constrained by the authority it held while producing it.
An agent cannot exhibit a passing integration-test record if it may not run tests, and it cannot demonstrate safe handling of a deployment tool it has never been able to invoke.
Ascent is therefore a bootstrapping process, and what it can reach is a closure rather than the ceiling by default.

\begin{definition}[Realizability]
\label{def:realizable}
Let $\mathsf{Beh}(\mathcal{U})$ be the set of finite behaviors the agent may execute in a session whose active lease is drawn from $\mathcal{U}$ at every step, and let $\mathsf{Evid}(b)$ be the evidence induced by a behavior $b$.
Write $\mathcal{U} \rightsquigarrow L^{*}$ if some $b \in \mathsf{Beh}(\mathcal{U})$ satisfies $\mathsf{Evid}(b) \vdash L^{*}$.
\end{definition}

The existential quantifier is deliberately attacker-favorable: it suffices that one available behavior produces qualifying evidence, whether or not a benign agent would choose it.

\begin{definition}[Escalation closure]
\label{def:closure}
Write ${\downarrow}X = \{L : \exists L' \in X,\ L \preceq L'\}$ for downward closure.
For a grant $\langle L_{\max}, L_0, \vdash \rangle$, the \emph{escalation closure} $\mathcal{U}_{\infty}$ is the least set of leases satisfying
\begin{enumerate}
    \item $L_0 \in \mathcal{U}_{\infty}$ and every member is below $L_{\max}$;
    \item $\mathcal{U}_{\infty}$ is downward closed; and
    \item if $\mathcal{U}_{\infty} \rightsquigarrow L^{*}$, then $L^{*} \sqcap L_{\max} \in \mathcal{U}_{\infty}$.
\end{enumerate}
Its permission bound is
\begin{equation}
    \mathsf{Reach}(L_{\max}, L_0, \vdash) \;=\; \bigcup_{L \in \mathcal{U}_{\infty}} \mathsf{Scope}(L).
\end{equation}
\end{definition}

The full set $\{L:L \preceq L_{\max}\}$ satisfies the three conditions.
The intersection of all sets satisfying them also does so because realizability is monotone under set inclusion, hence a least such set exists.
The construction intentionally forgets which witnesses occur in mutually compatible histories.
It is therefore a history-independent over-approximation of the permissions that gated ascent may make active, not an exact set of effects that executions can realize.

\begin{theorem}[Gate-dependent upper bound]
\label{thm:reach}
Under the assumptions of Theorem~\ref{thm:nonamp}, every execution $\pi$ under a single grant $\langle L_{\max}, L_0, \vdash \rangle$ satisfies
\begin{equation}
    \mathsf{Effects}(\pi) \;\subseteq\; \mathsf{Reach}(L_{\max}, L_0, \vdash),
\end{equation}
without claiming that every member of the right-hand side is realizable.
\end{theorem}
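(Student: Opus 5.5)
The argument has two parts. First, every active lease in the execution lies in the escalation closure $\mathcal{U}_{\infty}$. Second, every protected effect lies in the scope of some active lease. Let $\pi$ be an execution under the grant, and let $L_0, L_1, \dots$ be the active leases it produces.

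\textbf{Step 1: induction on $t$ showing $L_t \in \mathcal{U}_{\infty}$.} The base case is condition~1 of Definition~\ref{def:closure}.

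For the inductive step, assume $L_0,\dots,L_{t-1} \in \mathcal{U}_{\infty}$. The proposal $L_t^{\mathrm{prop}}$ is either a descent or an ascent, as in Definition~\ref{def:band}.

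\emph{Descent.} Here $L_t^{\mathrm{prop}} \preceq L_{t-1}$. Then $L_t = L_t^{\mathrm{prop}} \sqcap L_{\max} \preceq L_{t-1}$, so $L_t \in \mathcal{U}_{\infty}$ by downward closure (condition~2).

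\emph{Ascent.} Here $\mathsf{Evid}(t) \vdash L_t^{\mathrm{prop}}$. The evidence is induced by the execution prefix $b$ of $\pi$ up to $t$. Every step of $b$ ran under one of $L_0,\dots,L_{t-1}$, all of which lie in $\mathcal{U}_{\infty}$ by hypothesis. Hence $b \in \mathsf{Beh}(\mathcal{U}_{\infty})$, which gives $\mathcal{U}_{\infty} \rightsquigarrow L_t^{\mathrm{prop}}$. Condition~3 then yields $L_t = L_t^{\mathrm{prop}} \sqcap L_{\max} \in \mathcal{U}_{\infty}$.

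Delegated children need the same treatment. By the attenuating-delegation assumption of Theorem~\ref{thm:nonamp}, each child lease is $\preceq$ some parent lease in its finite ancestry. Induction on ancestry depth, using downward closure, places child leases in $\mathcal{U}_{\infty}$ as well.

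\textbf{Step 2: effects lie in scopes.} Take any protected effect of $\pi$. Complete mediation attributes it to an action $a_t$ admitted under some active or delegated lease $L$. The closure-soundness obligation of Theorem~\ref{thm:nonamp} gives $\mathsf{Eff}(S_t,a_t) \subseteq \mathsf{Req}(a_t)$. Admission gives $\mathsf{Req}(a_t) \subseteq \mathsf{Scope}(L)$, using the projection in Equation~\ref{eq:model-projection}. Since $L \in \mathcal{U}_{\infty}$ by Step 1, the effect lies in $\mathsf{Reach}(L_{\max}, L_0, \vdash)$.

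\textbf{Main obstacle.} The delicate point is the ascent case: showing that the real prefix $b$ is a member of $\mathsf{Beh}(\mathcal{U}_{\infty})$. Definition~\ref{def:realizable} quantifies over behaviors whose active lease is drawn from $\mathcal{U}$ at \emph{every} step. The leases used along $b$ change over time, but each one is already in $\mathcal{U}_{\infty}$ by the induction hypothesis, so membership follows as long as $\mathsf{Beh}$ permits per-step choice of lease. This is exactly why the closure is defined history-independently.

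Two further requirements must be checked. First, $\mathsf{Evid}(t)$ must equal $\mathsf{Evid}(b)$; that is, evidence depends only on the prefix. Second, the monitor's integrity assumption must guarantee that no protected step of $b$ ran outside a recorded active lease. If the monitor view under-approximates effects, it could let $b$ fall outside $\mathsf{Beh}(\mathcal{U}_{\infty})$, and closure soundness is what rules that out.

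Finally, the theorem does not claim that every member of $\mathsf{Reach}$ is realizable, so no converse is needed.
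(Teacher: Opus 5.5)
Your Step~1 matches the paper's induction: the base case comes from clause~1 of Definition~\ref{def:closure}, and descent follows by downward closure. For ascent, the actual prefix $b$ ran only under leases already in $\mathcal{U}_{\infty}$, so $b \in \mathsf{Beh}(\mathcal{U}_{\infty})$ and clause~3 applies to $L_t^{\mathrm{prop}}$. Your treatment of same-context delegated children is a useful addition that the paper's proof glosses over. A child created from an earlier, larger $L_s$ need not lie below the current $L_t$, so placing it in $\mathcal{U}_{\infty}$ through its ancestry and downward closure is the right move. It should be run jointly with the induction on $t$, since children's actions are part of $b$. One misplacement: $b \in \mathsf{Beh}(\mathcal{U}_{\infty})$ follows from the induction hypothesis alone, because $\mathsf{Beh}$ is indexed by which leases were active. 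Closure soundness plays no role there.

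The genuine gap is in Step~2. You assert that the closure-soundness obligation gives $\mathsf{Eff}(S_t,a_t) \subseteq \mathsf{Req}(a_t)$. It does not. That inclusion is Equation~\ref{eq:request-sound}, which is a hypothesis of the compact Theorem~\ref{thm:compact-nonamp} and is not among the assumptions of Theorem~\ref{thm:nonamp}. Condition (A2) relates closures, $\mathcal{A}(S_t,L) \subseteq \mathcal{A}^{\#}(\hat{S}_t,L)$. It is consistent with an action whose realized effects exceed its request while staying in scope, for example a requested test run that also performs a permitted local file update. In that case attribution and all of (A1)--(A5) can hold while $\mathsf{Eff}(S_t,a_t) \not\subseteq \mathsf{Req}(a_t)$. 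Group~(i) of Definition~\ref{def:validity} deliberately checks only the request presented at the boundary. As written, your Step~2 therefore proves the theorem under an extra hypothesis. The repair is the chain from the proof of Theorem~\ref{thm:nonamp}, which is what the paper's proof invokes at this point. Because $a_t$ is permitted under its attributing lease $L$, the one-step sequence gives $\mathsf{Eff}(S_t,a_t) \subseteq \mathcal{A}(S_t,L)$. Then (A2) gives $\mathcal{A}(S_t,L) \subseteq \mathcal{A}^{\#}(\hat{S}_t,L)$, and group~(ii) of the validity predicate, enforced via (A3) and (A5), gives $\mathcal{A}^{\#}(\hat{S}_t,L) \subseteq \mathsf{Scope}(L)$. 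With $L \in \mathcal{U}_{\infty}$ from Step~1, each effect lies in $\mathsf{Reach}(L_{\max},L_0,\vdash)$. In short, you used (A2) where it is not needed and substituted request soundness exactly where (A2) is what the argument requires.
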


\begin{proof}
For containment we show by induction on $t$ that the active lease satisfies $L_t \in \mathcal{U}_{\infty}$.
The base case and descent case follow from the first two clauses of Definition~\ref{def:closure}.
If step $t$ is a gated ascent, the gate fired on $\mathsf{Evid}(t)$, which is induced by the execution prefix $b$ up to $t$.
By the induction hypothesis every active lease along that prefix lies in $\mathcal{U}_{\infty}$, so $b \in \mathsf{Beh}(\mathcal{U}_{\infty})$ and hence $\mathcal{U}_{\infty} \rightsquigarrow L_t^{\mathrm{prop}}$.
The third clause then gives $L_t^{\mathrm{prop}} \sqcap L_{\max} = L_t \in \mathcal{U}_{\infty}$.
By Theorem~\ref{thm:nonamp} the effects of the action admitted at step $t$ lie in $\mathsf{Scope}(L_t)$, and taking the union over $t$ gives the claim.
\end{proof}

\begin{corollary}[The ceiling is a sound over-approximation]
\label{cor:ceilingover}
$\mathsf{Reach}(L_{\max}, L_0, \vdash) \subseteq \mathsf{Scope}(L_{\max})$ for every $L_0$ and every $\vdash$.
\end{corollary}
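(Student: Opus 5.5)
The plan is to argue lease by lease. By Definition~\ref{def:closure}, $\mathsf{Reach}(L_{\max}, L_0, \vdash)$ is the union of $\mathsf{Scope}(L)$ over $L \in \mathcal{U}_{\infty}$. It therefore suffices to show that $\mathsf{Scope}(L) \subseteq \mathsf{Scope}(L_{\max})$ for each member $L$ of the escalation closure. The first clause of Definition~\ref{def:closure} says outright that every member of $\mathcal{U}_{\infty}$ lies below $L_{\max}$, i.e.\ $L \preceq L_{\max}$. Proposition~\ref{prop:monotone} then gives $\mathsf{Scope}(L) \subseteq \mathsf{Scope}(L_{\max})$. Taking the union over $L \in \mathcal{U}_{\infty}$ yields the claim, and since a union of subsets of a fixed set is again a subset of it, no further structure is needed.

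The only point requiring care is that $\mathcal{U}_{\infty}$ is well defined and actually satisfies clause~1, rather than just being the intersection of sets that happen to satisfy it. I would recall the argument given after Definition~\ref{def:closure}. The full down-set $\{L : L \preceq L_{\max}\}$ satisfies all three clauses. For clause~3, $L^{*} \sqcap L_{\max} \preceq L_{\max}$ holds by Proposition~\ref{prop:semilattice}, and this is the only place the meet is used. Clause~1 is preserved under intersection because it is a pointwise condition.

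So the least set is contained in this down-set, and the membership bound holds for every $L \in \mathcal{U}_{\infty}$. In fact this containment alone already suffices, even without relying on minimality. One also notes that the leases in $\mathcal{U}_{\infty}$ share the context $C_G$, so the comparison with $L_{\max}$ is meaningful.

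I expect no real obstacle; the statement is quantified over all $L_0$ and all $\vdash$ precisely because neither enters the argument. The one subtlety to flag is that the containment holds even for a degenerate or adversarially realizable $\vdash$. This is because the clamp in clause~3 is applied before insertion, so no gate can place a lease above $L_{\max}$ into the closure. This is exactly the formal content of ``evidence cannot move the ceiling.'' Combined with Theorem~\ref{thm:reach}, it gives $\mathsf{Effects}(\pi) \subseteq \mathsf{Scope}(L_{\max})$, recovering Theorem~\ref{thm:compact-nonamp} for a single grant.
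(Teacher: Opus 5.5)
Your proposal is correct and matches the paper's proof: clause~1 of Definition~\ref{def:closure} gives $L \preceq L_{\max}$ for every $L \in \mathcal{U}_{\infty}$, and Proposition~\ref{prop:monotone} turns that into $\mathsf{Scope}(L) \subseteq \mathsf{Scope}(L_{\max})$, so the union inherits the bound without any reference to $\vdash$. Your extra discussion of why $\mathcal{U}_{\infty}$ is well defined is harmless but not needed, since clause~1 states the bound directly.
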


\begin{proof}
Every $L \in \mathcal{U}_{\infty}$ satisfies $L \preceq L_{\max}$ by the first clause of Definition~\ref{def:closure}.
Monotonicity (Proposition~\ref{prop:monotone}) then gives $\mathsf{Scope}(L) \subseteq \mathsf{Scope}(L_{\max})$.
Note that this argument never mentions $\vdash$.
\end{proof}

Theorem~\ref{thm:nonamp} discards the gate-dependent closure and retains only the deployable ceiling.
The gap between the bounds is a sound estimate of what a gate may exclude; it need not be an attainable effect region.
We now characterize both ends of that gap.

\begin{proposition}[When the ceiling is tight]
\label{prop:bandtop}
Call $\vdash$ \emph{unconditionally realizable} if $\mathcal{U}_{\infty} \rightsquigarrow L^{*}$ for every $L^{*} \preceq L_{\max}$.
For such $\vdash$ we have $\mathsf{Reach}(L_{\max},L_0,\vdash) = \mathsf{Scope}(L_{\max})$, so this analysis cannot certify a bound below the ceiling.
\end{proposition}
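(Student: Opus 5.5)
We prove the equality by establishing the two inclusions separately.

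The inclusion $\mathsf{Reach}(L_{\max},L_0,\vdash) \subseteq \mathsf{Scope}(L_{\max})$ needs no new argument. It is exactly Corollary~\ref{cor:ceilingover}, which holds for every evidence relation, so in particular for an unconditionally realizable one.

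For the reverse inclusion it suffices to show $L_{\max} \in \mathcal{U}_{\infty}$, since then $\mathsf{Scope}(L_{\max})$ is one of the sets in the union defining $\mathsf{Reach}$.

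\begin{itemize}
\item Apply the hypothesis with $L^{*} := L_{\max}$. This is legitimate because $\preceq$ is a preorder, so $L_{\max} \preceq L_{\max}$, and it yields $\mathcal{U}_{\infty} \rightsquigarrow L_{\max}$.
\item Clause~3 of Definition~\ref{def:closure} then places $L_{\max} \sqcap L_{\max}$ in $\mathcal{U}_{\infty}$.
\item By the componentwise meet of Proposition~\ref{prop:semilattice}, $L_{\max} \sqcap L_{\max}$ has the same fields as $L_{\max}$. Its goal and phase fields are $g \sqcap g$ and $p \sqcap p$, which may be merely $\sqsubseteq_g$-equivalent to $g$ and $p$ rather than literally equal, since $\sqsubseteq_g$ is only a preorder.
\item Its resource and effect-type fields are literally $R$ and $E$. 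Since $\mathsf{Scope}$ (Definition~\ref{def:scope}) depends only on $R$ and $E$, we get $\mathsf{Scope}(L_{\max} \sqcap L_{\max}) = \mathsf{Scope}(L_{\max})$, and the conclusion follows even without literal equality of the leases.
\end{itemize}

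The one point needing care is a possible circularity. The hypothesis is phrased relative to $\mathcal{U}_{\infty}$, the very set whose contents we are computing. This causes no trouble, because the hypothesis is stated as a property of the least closure itself. We only read off its consequence under clause~3, and never need to construct a witnessing behavior in $\mathsf{Beh}(\mathcal{U}_{\infty})$ ourselves.

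For the interpretive remark, that the analysis cannot certify a bound below the ceiling, observe that $\mathsf{Reach}$ is the gate-dependent bound of Theorem~\ref{thm:reach}. Once it equals $\mathsf{Scope}(L_{\max})$, any bound this analysis derives from $\vdash$ coincides with the ceiling already given by Theorem~\ref{thm:nonamp}.
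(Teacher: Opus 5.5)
Your proof is correct and matches the paper's: containment is Corollary~\ref{cor:ceilingover}, and the reverse inclusion follows by instantiating the hypothesis at $L^{*}=L_{\max}$ so that $L_{\max}$ lies in $\mathcal{U}_{\infty}$. Your extra care about clause~3 yielding $L_{\max}\sqcap L_{\max}$ rather than $L_{\max}$ itself, resolved by noting that $\mathsf{Scope}$ depends only on the $R$ and $E$ fields, fills in a step the paper leaves implicit.
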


\begin{proof}
Containment is Corollary~\ref{cor:ceilingover}.
Conversely, the hypothesis places every $L^{*} \preceq L_{\max}$ in $\mathcal{U}_{\infty}$, in particular $L_{\max}$ itself, so $\mathsf{Scope}(L_{\max}) \subseteq \mathsf{Reach}$.
\end{proof}

\begin{remark}[Self-realizable evidence]
\label{rem:commoncase}
Proposition~\ref{prop:bandtop} is conditional.
Its hypothesis is plausible for gates based only on evidence that the session can itself realize, such as some task outcomes, test records, declared effect manifests, or execution history.
Whether the hypothesis holds for a concrete gate must be established from that gate's provenance and the session's causal capabilities; it does not follow from the phrase ``behavior-derived'' alone.
\end{remark}

\begin{proposition}[When a gate strictly tightens the bound]
\label{prop:tighten}
Suppose $L^{*} \preceq L_{\max}$ but $L^{*} \npreceq L_0$, and every derivation of $\mathsf{Evid} \vdash L'$ whose clamped result satisfies $L^{*} \preceq L' \sqcap L_{\max}$ requires a premise $\nu$ outside the agent's causal reach.
Suppose also that $\nu$ does not hold independently during the session.
Then $L^{*} \notin \mathcal{U}_{\infty}$, and whenever $\mathsf{Scope}(L^{*}) \not\subseteq \mathsf{Reach}(L_{\max},L_0,\vdash)$ the inclusion of Corollary~\ref{cor:ceilingover} is strict.
\end{proposition}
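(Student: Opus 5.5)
The plan is to exhibit an explicit set of leases that satisfies the three clauses of Definition~\ref{def:closure} and does not contain $L^{*}$. Leastness of $\mathcal{U}_{\infty}$ then forces $L^{*} \notin \mathcal{U}_{\infty}$. The natural candidate is
\[
\mathcal{V} = \{L \preceq L_{\max} : L^{*} \npreceq L\},
\]
the leases below the ceiling that do not dominate $L^{*}$. If $L^{*}$ lay in $\mathcal{U}_{\infty}$ and $\mathcal{U}_{\infty} \subseteq \mathcal{V}$, we would get $L^{*} \npreceq L^{*}$, which contradicts reflexivity of $\preceq$.

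Clause~1 follows from the hypotheses. We have $L_0 \preceq L_{\max}$ by Definition~\ref{def:band}, and $L^{*} \npreceq L_0$ by assumption, so $L_0 \in \mathcal{V}$. Clause~2 follows from transitivity. If $L \preceq L' \in \mathcal{V}$ and $L^{*} \preceq L$, then $L^{*} \preceq L'$, which is impossible; also $L \preceq L_{\max}$.

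Clause~3 is the main obstacle. Suppose $\mathcal{V} \rightsquigarrow L'$, so some behavior $b \in \mathsf{Beh}(\mathcal{V})$ has $\mathsf{Evid}(b) \vdash L'$. We must show $L' \sqcap L_{\max} \in \mathcal{V}$. That it lies below $L_{\max}$ is immediate from the meet. Now suppose instead $L^{*} \preceq L' \sqcap L_{\max}$. By hypothesis every such derivation uses the premise $\nu$. Here I will read ``outside the agent's causal reach'' as meaning that no behavior realizable under leases not dominating $L^{*}$ (in particular, under $\mathcal{V}$) induces $\nu$ in its evidence. The extra hypothesis that $\nu$ does not hold independently during the session rules out the other way $\nu$ could appear in $\mathsf{Evid}(b)$. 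Together these contradict the derivation, so clause~3 holds.

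The delicate point is making this reading explicit. Causal reach has to be interpreted relative to the authority actually available, namely $\mathcal{V}$. Otherwise the argument becomes circular: it would assume $\mathcal{U}_{\infty}$ while trying to characterize it. If needed, I will state this interpretation as the precise meaning of the hypothesis, since the proposition's phrasing informally intends it.

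With $\mathcal{V}$ shown to be closed under all three clauses, leastness gives $\mathcal{U}_{\infty} \subseteq \mathcal{V}$, hence $L^{*} \notin \mathcal{U}_{\infty}$. For the strictness claim, note that $L^{*} \preceq L_{\max}$ gives $\mathsf{Scope}(L^{*}) \subseteq \mathsf{Scope}(L_{\max})$ by Proposition~\ref{prop:monotone}. Under the additional assumption $\mathsf{Scope}(L^{*}) \not\subseteq \mathsf{Reach}$, some effect lies in $\mathsf{Scope}(L_{\max})$ but not in $\mathsf{Reach}$. Combined with Corollary~\ref{cor:ceilingover}, the inclusion is strict. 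I will also remark why the scope condition is needed at all: $L^{*} \notin \mathcal{U}_{\infty}$ alone does not suffice, because the union of scopes of other members could still cover $\mathsf{Scope}(L^{*})$.
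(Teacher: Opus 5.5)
Your proof is correct and follows the paper's argument: the same witness set $\mathcal{V}=\{L \preceq L_{\max} : L^{*}\npreceq L\}$, checking the three closure clauses with the premise $\nu$ blocking clause~3, then leastness. You are more explicit than the paper in two places it leaves implicit: you read ``causal reach'' relative to $\mathsf{Beh}(\mathcal{V})$, and you derive the strictness claim via Proposition~\ref{prop:monotone}.
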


\begin{proof}
Let $\mathcal{V}=\{L:L\preceq L_{\max}\ \wedge\ L^{*}\npreceq L\}$.
The assumption $L^{*}\npreceq L_0$ places $L_0$ in $\mathcal{V}$, and $\mathcal{V}$ is downward closed.
No behavior using leases in $\mathcal{V}$ can establish $\nu$, so Definition~\ref{def:realizable} and the hypothesis rule out every proposal whose clamped result dominates $L^{*}$.
Thus $\mathcal{V}$ satisfies all three closure conditions in Definition~\ref{def:closure}.
By leastness, $\mathcal{U}_{\infty}\subseteq\mathcal{V}$, and therefore $L^{*}\notin\mathcal{U}_{\infty}$.
\end{proof}

\begin{remark}[The design rule]
\label{rem:designrule}
Propositions~\ref{prop:bandtop} and~\ref{prop:tighten} give an implementer a conservative rule: \emph{a gate can support a security bound below the ceiling only through premises that the session cannot itself realize.}
A gate whose qualifying evidence is unconditionally realizable is a reliability mechanism and should not be described as a tighter adversarial bound.
Evidence provenance alone does not decide the issue; the relevant property is whether the session can realize every ascent path under the authority available along that path.
A gate whose premises include an external attestation, a human countersignature, a wall-clock or calendar condition, or an evaluation the agent cannot run on itself can keep selected high-impact authority outside $\mathcal{U}_{\infty}$ while that premise is absent.
The recipe follows directly: make every path to the most dangerous authority depend on a premise the session cannot manufacture, and the escalation closure can stop strictly below the ceiling.
\end{remark}

\begin{remark}[$\mathcal{U}_{\infty}$ is a specification, not an algorithm]
\label{rem:reachundecidable}
$\mathcal{U}_{\infty}$ is no more computable than $\mathcal{A}$, since deciding $\mathcal{U} \rightsquigarrow L^{*}$ quantifies over behaviors that include shell execution.
The sound direction is the same as in Remark~\ref{rem:undecidable}, and $\mathsf{Scope}(L_{\max})$ is a computable over-approximation of $\mathsf{Reach}$ that is available at grant time with no analysis at all.
This is why we state the deployable guarantee as Theorem~\ref{thm:nonamp} in terms of the ceiling, and treat Theorem~\ref{thm:reach} as the characterization that says how much a particular gate design is worth.
\end{remark}

\begin{remark}[What the ceiling buys, and what gating buys]
\label{rem:bandvalue}
Under Proposition~\ref{prop:bandtop}, authorizing the ceiling is adversarially equivalent to authorizing everything beneath it, so momentary $L_t$ is not a tighter security bound.
Gating can still reduce expected exposure and non-adversarial blast radius, lower prompt cost, and provide an auditable authority trajectory.
Ceiling occupancy is consequently a utility measure; a security claim below the ceiling requires the condition of Proposition~\ref{prop:tighten}.
\end{remark}

\subsubsection{Why the evidence rule need not be adversarially robust}
\label{sec:noadv}

Proposition~\ref{prop:bandtop} reads at first like a negative result about earned authority, and much practitioner guidance, which holds that autonomy should be expanded progressively on demonstrated performance rather than granted by default, is vulnerable to exactly that reading.
If competence evidence can be manufactured by the entity that benefits from it, then conditioning authority on competence appears to be a privilege-escalation channel by construction.

The layering of Definition~\ref{def:band} dissolves this objection rather than answering it.
Because the ceiling is fixed and Corollary~\ref{cor:ceilingover} holds independently of $\vdash$, the worst outcome of a fully compromised evidence channel is that $L_t$ rises to $L_{\max}$, which is the bound the system already guarantees.
The failure of the evidence rule is \emph{absorbed} by the boundary above it.
Consequently $\vdash$ carries no additional obligation for the \emph{ceiling-level non-amplification claim}.
It does carry a security obligation if the system advertises a stronger, gate-dependent bound below the ceiling.
For reliability-only allocation it may use signals that would be unacceptable as security boundaries, including test pass rates or historical error rates.

This is what makes competence gating implementable.
A competence signal is worthless as a security control, since demonstrating fluency in a language says nothing about resistance to injection, and a package manager that runs installation scripts confers arbitrary execution regardless of how well the agent writes code.
It is nonetheless perfectly serviceable as a reliability control situated beneath a security boundary that does not depend on it.
The two questions must be answered by different mechanisms: \emph{will this agent mishandle the authority} is answered by evidence, and \emph{what is the most this agent could do if that evidence is wrong} is answered by the ceiling.
Conflating them produces either a security mechanism resting on unreliable signals or a reliability mechanism burdened with adversarial requirements it cannot meet.

We state the resulting design obligation as a property of a system rather than as advice, because it is the paper's central normative claim and we want it to be checkable.

\begin{definition}[Separation obligation]
\label{def:separation}
An authorization system satisfies the \emph{separation obligation} if every grant fixes a bound $L_{\max}$ at grant time such that no signal admissible as an input to $\vdash$ can raise it.
\end{definition}

\begin{proposition}[The obligation is what makes the bound evidence-independent]
\label{prop:separation}
Let a system satisfy Definition~\ref{def:separation}.
Then $\mathsf{Effects}(\pi) \subseteq \mathsf{Scope}(L_{\max})$ for every evidence relation $\vdash$, including one chosen adversarially and one that is entirely compromised.
Let a system instead violate it, so that evidence advances a grant along a chain of bounds $L^{(1)} \preceq \cdots \preceq L^{(K)}$.
Then its guaranteed bound is $\mathsf{Scope}(L^{(K)})$ whenever the stage relation is unconditionally realizable in the sense of Proposition~\ref{prop:bandtop}.
\end{proposition}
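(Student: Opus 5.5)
The plan treats the two halves of Proposition~\ref{prop:separation} separately. Both reduce to results already stated: Theorem~\ref{thm:reach} with Corollary~\ref{cor:ceilingover} for the first half, and Proposition~\ref{prop:bandtop} for the second.

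\textbf{Evidence-independent bound under the separation obligation.} Fix an arbitrary $\vdash$, including an adversarial or fully compromised one, and an execution $\pi$ under the grant. Definition~\ref{def:separation} says no admissible input to $\vdash$ can change $L_{\max}$, so $L_{\max}$ is constant along $\pi$. Every active lease therefore arises from the clamp of Definition~\ref{def:band} against this single fixed ceiling, and Definition~\ref{def:closure} is well posed for the triple $\langle L_{\max},L_0,\vdash\rangle$. Theorem~\ref{thm:reach}, under the assumptions of Theorem~\ref{thm:nonamp}, then gives $\mathsf{Effects}(\pi)\subseteq\mathsf{Reach}(L_{\max},L_0,\vdash)$. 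Corollary~\ref{cor:ceilingover} gives $\mathsf{Reach}\subseteq\mathsf{Scope}(L_{\max})$, and its proof never uses $\vdash$, so the bound holds uniformly over all evidence relations. A more direct route also works: each $L_t = L_t^{\mathrm{prop}}\sqcap L_{\max}\preceq L_{\max}$; Proposition~\ref{prop:monotone} gives $\mathsf{Scope}(L_t)\subseteq\mathsf{Scope}(L_{\max})$; and request soundness plus admission place each realized effect in $\mathsf{Scope}(L_t)$. I would present this direct chain, since it makes the independence from $\vdash$ visible.

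\textbf{Guaranteed bound under a violating system.} Model the violating system as a staged grant. Stage $k$ runs the ceiling-clamped mechanism with ceiling $L^{(k)}$, and satisfying the stage relation advances the ceiling to $L^{(k+1)}$. Read ``guaranteed bound'' as the least bound that holds for all executions against an attacker-favorable existential, as in Definition~\ref{def:realizable}.

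For the upper bound, an induction on stages shows every active lease satisfies $L_t\preceq L^{(K)}$, because each clamp is against some $L^{(k)}\preceq L^{(K)}$. Monotonicity then gives containment in $\mathsf{Scope}(L^{(K)})$.

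For tightness, unconditional realizability means that at each stage some behavior available under the current closure produces evidence for the next stage. Chaining these $K-1$ witnesses gives a behavior reaching stage $K$. Within stage $K$, Proposition~\ref{prop:bandtop} gives $\mathsf{Reach}=\mathsf{Scope}(L^{(K)})$. So no analysis of this kind certifies anything smaller, and $\mathsf{Scope}(L^{(K)})$ is exactly the guaranteed bound.

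\textbf{Main obstacle.} The second half is only semi-formal, because the paper never defines staged ceilings. The plan is to define a stage-indexed closure $\mathcal{U}^{(k)}_\infty$ whose base is $\mathcal{U}^{(k-1)}_\infty$ rather than $L_0$, and to read the stage relation's unconditional realizability as realizability from the previous stage's closure. The concatenation of witnesses, where behaviors from earlier stages remain in $\mathsf{Beh}$ of later closures, needs $\mathsf{Beh}$ to be monotone in $\mathcal{U}$. I would state that monotonicity explicitly as a modeling assumption, matching the monotonicity already used to justify the existence of the least closure.
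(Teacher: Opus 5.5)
\textbf{First half.} This matches the paper, which cites Corollary~\ref{cor:ceilingover} and notes that its proof uses only the clamp and Proposition~\ref{prop:monotone}. Your direct chain $\mathsf{Eff}\subseteq\mathsf{Req}\subseteq\mathsf{Scope}(L_t)\subseteq\mathsf{Scope}(L_{\max})$ just makes explicit the effect-containment step that the paper leaves implicit when it passes from that corollary to $\mathsf{Effects}(\pi)$.

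\textbf{Second half.} Your upper bound is fine, but the tightness step has a gap. You model stage $k$ as ``the ceiling-clamped mechanism with ceiling $L^{(k)}$.'' That gives your system two gates: the stage relation, which moves the ceiling, and a within-stage relation $\vdash_k$, which moves the active lease beneath it. The proposition's hypothesis is about the stage relation only, and your witness-chaining uses it only to reach stage $K$. You then invoke Proposition~\ref{prop:bandtop} ``within stage $K$.'' Its hypothesis, however, concerns the evidence relation of the grant it is applied to, here $\vdash_K$, and you never assumed that relation unconditionally realizable. Suppose $\vdash_K$ made every path to $L^{(K)}$ depend on an external premise. Then the argument of Proposition~\ref{prop:tighten} would keep $L^{(K)}$ out of your stage-$K$ closure, and your conclusion that nothing smaller than $\mathsf{Scope}(L^{(K)})$ can be certified would fail in your own model.

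\textbf{The paper's route.} The paper avoids this by not introducing a second gate. It reads the entire ladder as a single grant with ceiling $L^{(K)}$ whose evidence relation \emph{is} the stage-advancement relation, so the rungs are positions of the active lease beneath one fixed ceiling. Proposition~\ref{prop:bandtop} then applies verbatim and yields both inclusions at once. This reduction also makes several parts of your construction unnecessary:
\begin{itemize}
\item the stage-indexed closures and the concatenation of witnesses, since iterated ascent is already packaged in the leastness of $\mathcal{U}_\infty$;
\item the extra monotonicity assumption, since monotonicity of $\mathsf{Beh}$ in $\mathcal{U}$ is essentially immediate from Definition~\ref{def:realizable}.
\end{itemize}
To repair your version, either adopt this collapse or add an explicit hypothesis. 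That hypothesis could be that $\vdash_K$ is unconditionally realizable with respect to $L^{(K)}$, or that advancing to stage $K$ activates $L^{(K)}$ directly.
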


\begin{proof}
The first claim is Corollary~\ref{cor:ceilingover}, whose proof uses only the clamp and Proposition~\ref{prop:monotone} and never refers to $\vdash$.
For the second, read the stage ladder as a single grant with ceiling $L^{(K)}$ and with $\vdash$ the stage-advancement relation, and apply Proposition~\ref{prop:bandtop}.
\end{proof}

Maturity models in which demonstrated performance removes the last ceiling at an unsupervised stage violate Definition~\ref{def:separation} by construction.
Proposition~\ref{prop:separation} prices that choice: the guarantee such a system offers is not the stage the agent currently occupies but the highest stage its own conduct can reach.

\begin{remark}[Authority ratchets upward without decay]
\label{rem:decay}
Descent is free but nothing in Definition~\ref{def:band} compels it, and evidence accumulates monotonically over a session: once $\mathsf{Evid} \vdash L^{*}$ has been established it does not spontaneously cease to hold.
The path of least resistance is therefore for the runtime to keep proposing whatever it last justified, so $L_t$ drifts upward to $L_{\max}$ and stays there, at which point the ceiling delivers none of the expected-case benefits of Remark~\ref{rem:bandvalue}.
A practical instantiation therefore needs an explicit \emph{decay} rule, for example dropping any component of $L_t$ not exercised within $k$ steps or since the last phase boundary.
Decay is free in the sense that it needs no user event, and it is what makes elasticity worth implementing at all.
It does not, however, shrink the escalation closure: realizability is an existential condition over behaviors, and a lease that decays can be re-earned by the behavior that earned it before.
Decay improves ceiling occupancy; it does not improve the bound of Theorem~\ref{thm:reach}.
\end{remark}

\subsubsection{Effective authority}
\label{sec:effauth}

Scope is a static notion. What we actually need to bound is everything an agent could achieve by chaining permitted actions, since a file-read tool and an outbound-network tool may each appear low risk while jointly enabling exfiltration.

\begin{definition}[Effective-authority closure]
\begin{equation}
    \mathcal{A}(S,L) = \left\{\epsilon \in \mathcal{E} \;\middle|\;
    \begin{array}{l}
        \epsilon \text{ is produced by a finite}\\
        \text{sequence of actions, each}\\
        \text{permitted under } L, \text{ from } S
    \end{array}
    \right\}.
\end{equation}
\end{definition}

\begin{remark}[$\mathcal{A}$ is a specification, not an algorithm]
\label{rem:undecidable}
Computing $\mathcal{A}$ exactly is not possible in general: permitted actions include shell execution, so reachability of an effect subsumes the halting problem, and external services are opaque. A deployed system must therefore use an over-approximation $\mathcal{A}^{\#}$ satisfying $\mathcal{A}(S,L) \subseteq \mathcal{A}^{\#}(S,L)$. Over-approximation is the sound direction: the check $\mathcal{A}^{\#}(S,L) \subseteq \mathsf{Scope}(L)$ implies $\mathcal{A}(S,L) \subseteq \mathsf{Scope}(L)$, at the cost of false invalidations.
\end{remark}

\subsection{Mutation Envelopes over Paths}
\label{sec:envelope}

The natural formulation of ``these two agent states may share authority'' is a relation $S_i \equiv_L S_j$. We deliberately do not adopt it as the operational criterion, for a reason worth stating explicitly. What a runtime can observe is not an unordered pair of states but a sequence of transitions, and the security-relevant question is whether \emph{each step taken} was one the grant anticipated. Formulating the criterion over paths also exposes a failure mode that a state-pair formulation obscures, namely authority that accumulates across steps each of which is individually permitted.

\begin{definition}[Path compatibility]
\label{def:pathcompat}
Let $\sigma = \lambda_1 \lambda_2 \cdots \lambda_t$ be the sequence of mutation labels observed since the grant was issued, and let $\mathcal{M}$ be its grant-fixed envelope (Definition~\ref{def:band}).
Then $\mathsf{PathCompat}(L,\sigma)$ holds iff $\lambda_i \in \mathcal{M}$ for all $i \leq t$, equivalently $\sigma \in \mathcal{M}^{*}$.
\end{definition}

Path compatibility is closed under composition by construction, so it does not suffer the transitivity defect that a naive state-equivalence relation would. But it is a purely syntactic condition on labels, and syntactic permission does not by itself bound authority.

\begin{remark}[Why the envelope must be grant-fixed]
\label{rem:envfixed}
$\mathsf{PathCompat}$ quantifies over the entire history since the grant, so an envelope that floated with $L_t$ would be incoherent.
Under a floating envelope, the free descent of Definition~\ref{def:band} or the decay of Remark~\ref{rem:decay} could shrink $\mathcal{M}$ and thereby retroactively invalidate labels that were admissible when they occurred, so that a grant would fail for having voluntarily given up authority.
Exempting $\mathcal{M}$ and $\mathcal{D}$ from float removes this, and it is also what makes the independence claimed in Remark~\ref{rem:twoguards} literally true: were $\mathcal{M}$ elastic it would be constrained by $\preceq$, and the two guards would be coupled through the clamp.
Delegation does not use the authority order to modify these fields.
A child either remains in the parent's context and is checked against $\mathcal{D}$, or receives a fresh grant context through a higher-authority issuance path.
\end{remark}

\begin{remark}[Two envelopes, two guards]
\label{rem:twoguards}
The transition envelope $\mathcal{M}$ of Definition~\ref{def:sound-envelope} and the authority ceiling of Definition~\ref{def:band} are independent, and conflating them is a modeling error we want to forestall. $\mathcal{M}$ is indexed by transition labels and asks whether the \emph{grant survives} a change; the ceiling is indexed by authority and asks how much of it may be \emph{in force}. Neither implies the other: a transition inside $\mathcal{M}$ can push $L_t^{\mathrm{prop}}$ above $L_{\max}$, and a transition outside $\mathcal{M}$ can leave the authority in force untouched. They therefore act as two guards with different responses, and Figure~\ref{fig:guards} shows that each cell of the product is occupied. A ceiling violation is handled mechanically, by the clamp, with no user event. An envelope violation suspends the grant itself and requires intervention, \emph{even when every effect in force remains beneath the ceiling}, which is precisely the paper's thesis that authorization is not reducible to effect containment. A provider switch may leave $\mathcal{A}^{\#}$ unchanged and still not be the principal the user approved.
\end{remark}

\begin{definition}[Sound envelope]
\label{def:sound-envelope}
Let $\mathcal{R}(S_0,\mathcal{M})$ be the set of states reachable from $S_0$ by transitions labeled in $\mathcal{M}$. The envelope $\mathcal{M}$ is \emph{sound} for $L$ at $S_0$ if
\begin{equation}
    \forall S \in \mathcal{R}(S_0,\mathcal{M}):\quad \mathcal{A}(S,L) \subseteq \mathsf{Scope}(L).
\end{equation}
\end{definition}

\begin{remark}[Compositional drift]
\label{rem:drift}
Envelope soundness is not automatic. An envelope may permit a sequence of labels each of which is individually innocuous while the composite state enlarges the effective-authority closure beyond $\mathsf{Scope}(L)$, for instance a permitted dependency addition followed by a permitted configuration edit that together open a network path. We call this \emph{compositional drift}. It is the principal open problem left by this model. A conservative runtime therefore does not rely on syntactic envelopes alone; at each admission point it checks $\mathcal{A}^{\#}(\hat{S}_t,L) \subseteq \mathsf{Scope}(L)$ and requires the cross-view closure soundness of (A2), while treating the syntactic envelope as a cheap pre-filter. Establishing that a given envelope is sound, so that the runtime check can be discharged statically, is the form that a soundness proof for a concrete policy should take.
\end{remark}

\subsection{Validity: Mechanical Conditions and Oracles}
\label{sec:validity}

The conditions under which an action is authorized are not epistemically uniform. Some are decidable set-membership tests; others require judging whether an action still serves the user's goal, which no runtime can settle without a semantic model. Conflating them in one conjunction hides where the difficulty lives, so we stratify.

\begin{definition}[Monitor-visible validity]
\label{def:validity}
Fix a semantic oracle $\Omega$ that decides goal and phase questions.
For a lease $L$, monitor view $\hat{S}_t$, observed label sequence $\sigma$, and action $a_t$, the predicate $\mathsf{Valid}_\Omega(L,\hat{S}_t,\sigma,a_t)$ is the conjunction of three groups:
\begin{equation}
    \begin{aligned}
        \mathrm{(i)}\;\; & \mathsf{Fresh}(L,t) \;\wedge\; \mathsf{Req}(a_t) \subseteq \mathsf{Scope}(L); \\
        \mathrm{(ii)}\;\; & \mathsf{PathCompat}(L,\sigma) \;\wedge\; \mathsf{DelegCompat}(d_t,\mathcal{D}) \\
        & \quad {}\wedge\; \mathcal{A}^{\#}(\hat{S}_t,L) \subseteq \mathsf{Scope}(L); \\
        \mathrm{(iii)}\;\; & \Omega_{\mathrm{goal}}(a_t,g) \;\wedge\; \Omega_{\mathrm{phase}}(q_t,p).
    \end{aligned}
\end{equation}
Group (i) is mechanical and decidable from the action alone.
Its second conjunct checks the request presented at the monitor boundary, not the effects later realized by execution.
Group (ii) is mechanical given $\hat{S}_t$; $d_t$ and $q_t$ here denote the authenticated delegation and phase values in that view.
Group (iii) is oracle-dependent, and is where the difficulty lives.
\end{definition}

\begin{assumption}[Sound oracle]
\label{asm:oracle}
$\Omega$ is sound: it rejects every action that falls outside the goal or phase the user authorized. It need not be complete; a conservative oracle that rejects too much costs utility, not safety.
\end{assumption}

Claims that admitted actions remain within the user's goal or phase are conditional on Assumption~\ref{asm:oracle}; the effect-containment theorem below is not.
This distinction is a real limitation and we prefer to make it visible rather than absorb it into notation.
Two consequences follow.
First, any instrument that tests this model empirically should avoid the oracle rather than approximate it, by fixing goals and phases in advance so that a benchmark policy rather than a language model's judgment determines the ground truth.
Second, constructing a usable $\Omega$, or restricting the setting so that one is unnecessary, is on the research agenda rather than solved here (Section~\ref{sec:agenda}).

\subsection{Authorization Continuity}
\label{sec:continuity}

\begin{definition}[Authorization continuity]
An execution satisfies \emph{authorization continuity} if every protected action is backed by a currently valid lease:
\begin{equation}
    \forall t, a_t:\quad \mathsf{Execute}(a_t) \Rightarrow \exists L \in \mathcal{L}_t:\ \mathsf{Valid}_\Omega(L,\hat{S}_t,\sigma_t,a_t),
\end{equation}
where $\mathcal{L}_t$ is the set of live leases and $\sigma_t$ the label sequence since each lease was granted.
\end{definition}

A \emph{continuity violation} occurs when the runtime executes an action without this monitor-visible predicate, or when the view is not closure-sound as required below.

The existential quantifier requires care. If an action may draw on any live lease, two narrow leases can jointly license behavior that neither was meant to permit, the classic confused-deputy shape. We therefore require attribution.

\begin{assumption}[Single-lease attribution]
\label{asm:attribution}
Each admitted action is attributed to exactly one live lease, and its full effect set must be valid under that lease alone. Authority from distinct leases is never combined to justify a single action.
\end{assumption}

\subsection{A Non-Amplification Theorem}
\label{sec:theorem}

We can now state what the model buys.
Let $\mathsf{Granted}(\pi)$ be the set of \emph{root} leases, that is, the ceilings issued directly by the user during execution $\pi$, and let $\mathsf{Effects}(\pi)$ be the set of protected effects produced during $\pi$.
Every live lease is related to a root in one of two ways: by \emph{float}, since the clamp of Definition~\ref{def:band} holds an active lease beneath the ceiling of its own grant, or by \emph{delegation}, since (A4) attenuates a child below its parent.
Both relations are instances of $\preceq$, which is what lets a single induction cover them.

\begin{assumption}[Runtime conditions]
\label{asm:runtime}
\leavevmode
\begin{enumerate}[label=(A\arabic*),ref=A\arabic*,leftmargin=2.75em,labelsep=0.45em,align=left,topsep=4pt,itemsep=4pt]
    \item \textbf{Complete mediation.}
    Every protected effect is produced only by an action the monitor admits, and the realized effects of that action are $\mathsf{Eff}(S_t,a_t)$.
    \item \textbf{Closure-sound monitor view.}
    Before each admission, the monitor incorporates all relevant mutation events and its resulting view satisfies $\mathcal{A}(S_t,L) \subseteq \mathcal{A}^{\#}(\hat{S}_t,L)$ for every live lease $L$.
    Event delivery is necessary for this implementation strategy but is not by itself sufficient.
    \item \textbf{Sound admission.}
    The monitor admits $a_t$ under $L$ only if $\mathsf{Valid}_\Omega(L,\hat{S}_t,\sigma_t,a_t)$ holds.
    All mechanical checks are evaluated on the authenticated monitor view; their relation to the true current state is exactly the separate obligation in (A2).
    \item \textbf{Attenuating delegation.}
    Within a grant context, every child lease satisfies $L_{\mathrm{child}} \preceq L_{\mathrm{parent}} \sqcap L_{\mathrm{subtask}}$, and delegation depth is finite.
    A child with a different context requires a fresh root grant.
    \item \textbf{Monitor integrity.}
    No transition modifies the monitor, its lease store, or its log except through a higher-authority update path outside the agent's control.
\end{enumerate}
\end{assumption}

\begin{remark}[Why (A2) is stated on the closure and not on the state]
\label{rem:a2form}
It is tempting to require $\hat{S}_t = S_t$, but labels name classes of transitions rather than byte-level diffs (Section~\ref{sec:executions}), so no discipline of label delivery can make the monitor's view equal the true state.
The guarantee needs something weaker and attainable: the view must not \emph{under}-approximate what the agent can do, so that a check passed on $\hat{S}_t$ is also passed on $S_t$.
Closure-soundness is exactly that condition, and it composes with the over-approximation of Remark~\ref{rem:undecidable}, both erring in the safe direction.
The construction of Proposition~\ref{prop:necessity} violates it in the sharpest available way: the view retains a capability configuration with no network tool while the true configuration has one, so an effect available at $S_t$ is missing from $\mathcal{A}^{\#}(\hat{S}_t,L)$ altogether.
\end{remark}

\begin{theorem}[Non-amplification]
\label{thm:nonamp}
Under Assumptions~\ref{asm:attribution} and~\ref{asm:runtime}, every execution $\pi$ satisfies
\begin{equation}
    \mathsf{Effects}(\pi) \;\subseteq\; \bigcup_{L \in \mathsf{Granted}(\pi)} \mathsf{Scope}(L).
\end{equation}
That is, mutation cannot enlarge the set of achievable effects beyond the union of the scopes the user explicitly granted, however many state transitions occur.
\end{theorem}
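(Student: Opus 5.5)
The plan has two parts: a per-action containment chain, and an induction showing that every live lease sits below some user-issued root. Fix an execution $\pi$ and a protected effect $\epsilon \in \mathsf{Effects}(\pi)$. By complete mediation (A1), $\epsilon \in \mathsf{Eff}(S_t,a_t)$ for some action $a_t$ that the monitor admitted. By Assumption~\ref{asm:attribution}, $a_t$ is attributed to exactly one live lease $L$, and by sound admission (A3), $\mathsf{Valid}_\Omega(L,\hat{S}_t,\sigma_t,a_t)$ holds. The effect-containment argument then needs only the mechanical groups (i) and (ii), so the result does not depend on the oracle, in line with the remark after Assumption~\ref{asm:oracle}.

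The first step is to show $\mathsf{Eff}(S_t,a_t) \subseteq \mathsf{Scope}(L)$. The action $a_t$ is permitted under $L$ from $S_t$, since it is a one-step sequence, so $\mathsf{Eff}(S_t,a_t) \subseteq \mathcal{A}(S_t,L)$. Closure soundness (A2) gives $\mathcal{A}(S_t,L) \subseteq \mathcal{A}^{\#}(\hat{S}_t,L)$, and group (ii) of validity gives $\mathcal{A}^{\#}(\hat{S}_t,L) \subseteq \mathsf{Scope}(L)$. This is the structured replacement for Equation~\ref{eq:request-sound}. Note that the request check in group (i) alone would not suffice, because it bounds only $\mathsf{Req}(a_t)$ and not the realized effects.

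The second step, which I expect to be the main obstacle, is showing that $L \preceq L_{\mathrm{root}}$ for some $L_{\mathrm{root}} \in \mathsf{Granted}(\pi)$. I would proceed by well-founded induction on the provenance of live leases, with two cases.
\begin{itemize}
\item A lease created by float satisfies $L_t = L_t^{\mathrm{prop}} \sqcap L_{\max} \preceq L_{\max}$ (Definition~\ref{def:band}, Proposition~\ref{prop:semilattice}). Here $L_{\max}$ is either a user-issued root or itself a delegated child's ceiling.
\item A delegated child within the same context satisfies $L_{\mathrm{child}} \preceq L_{\mathrm{parent}}$ by (A4). A child in a different context requires a fresh root grant, so it is itself in $\mathsf{Granted}(\pi)$.
\end{itemize}
Finite delegation depth, together with the finite history of each execution prefix, makes the ancestry chain terminate at a root. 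Transitivity of $\preceq$ then gives $L \preceq L_{\mathrm{root}}$. Monitor integrity (A5) is what rules out extra lease sources: the agent cannot write a lease into the store, nor alter a stored $L_{\max}$, $\mathcal{M}$ or $\mathcal{D}$, except through the higher-authority path, and leases produced that way count as user-issued roots. The delicate points are stating the induction so that leases from different grant contexts are never compared, and confirming that float never changes $L_{\max}$, which holds because the ceiling is fixed in Definition~\ref{def:band}.

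Finally, Proposition~\ref{prop:monotone} gives $\mathsf{Scope}(L) \subseteq \mathsf{Scope}(L_{\mathrm{root}})$. Combining this with the first step places $\epsilon$ in the union over $\mathsf{Granted}(\pi)$. The number of mutation transitions never enters the argument, because the chain is evaluated fresh at each admission.
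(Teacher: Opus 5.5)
Your proposal is correct and is essentially the paper's proof. It uses the same chain $\mathsf{Eff}(S_t,a_t)\subseteq\mathcal{A}(S_t,L)\subseteq\mathcal{A}^{\#}(\hat{S}_t,L)\subseteq\mathsf{Scope}(L)$, obtained from the one-step witness, (A2), and the closure conjunct of group (ii) under (A3), followed by the same ancestry ``chain bound'' through the clamp and (A4) to a root and then Proposition~\ref{prop:monotone}. The paper aggregates over actions by induction on the number of admitted actions rather than your per-effect argument, and your use of (A5) to exclude other lease sources and your handling of different-context children are slightly more careful than its version.
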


The proof is by induction on admitted actions and appears in Appendix~\ref{app:proof}.
Its structure is worth previewing, since it shows which assumption does which work: (A1) confines effects to admitted actions; (A2) guarantees that the closure the monitor evaluated on its view still covers the true current state rather than a stale one; (A3) then bounds the action's effects by $\mathsf{Scope}(L)$ for the attributing lease $L$; and the clamp together with (A4) and Proposition~\ref{prop:monotone} lifts that bound from a floating or delegated lease to its root.

\begin{remark}[What the theorem does not say]
\label{rem:notnoninterference}
Theorem~\ref{thm:nonamp} bounds the set of \emph{atomic effects}, not information flows. Reading a secret under one lease and sending an unrelated message under another produces two effects, each inside its lease's scope, while the composition may still disclose the secret. Effect containment is therefore strictly weaker than non-interference, and a system aiming at confidentiality needs information-flow labels in addition to the machinery here.
\end{remark}

\subsection{Necessity of a Closure-Sound View}
\label{sec:necessity}

Assumption (A2) is semantic, not bookkeeping.
Dropping closure soundness makes the bound fail by an unbounded margin; missing a mutation event is one concrete way to cause that failure.

\begin{proposition}[Closure soundness is necessary]
\label{prop:necessity}
There is a family of executions satisfying Assumption~\ref{asm:attribution} and conditions (A1), (A3), (A4), and (A5), but not (A2), for which
\begin{equation}
    \mathsf{Effects}(\pi)
    \not\subseteq \bigcup_{L \in \mathsf{Granted}(\pi)} \mathsf{Scope}(L),
\end{equation}
and in which the number of out-of-scope effects grows without bound.
\end{proposition}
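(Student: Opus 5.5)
The proof is a counterexample family. We keep everything except (A2) intact and let the monitor view $\hat{S}_t$ fall out of step with the true state $S_t$ in its capability component. We take a single root grant with ceiling $L_{\max}=L_0=L$, whose scope contains only local effects, for example $E=\{\texttt{local-read},\texttt{local-exec}\}$ on a repository resource set $R$. At $S_0$ the capability configuration $c_0$ has no network tool. The envelope $\mathcal{M}$ contains a benign label such as a skill update. There is no delegation, so $\mathcal{D}=\emptyset$ and (A4) holds vacuously. The monitor is external and unmodified, so (A5) holds. Every action is attributed to $L$, so Assumption~\ref{asm:attribution} holds.

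The execution first takes one transition labeled by the in-envelope skill-update label. Its true effect is to replace the local test routine with a wrapper that also performs an outbound network transmission, the Proposition~\ref{prop:needmoves} runner. The monitor records only the label, so $\hat{S}_1$ keeps $c_0$ with no network tool. Next we define actions $a_1,a_2,\dots,a_n$, each a call to the test routine with $\mathsf{Req}(a_i)=\{\text{local-exec on }r\}\subseteq\mathsf{Scope}(L)$. We choose the monitor's over-approximation $\mathcal{A}^{\#}(\hat{S}_t,L)$ to be computed from the stale view, so it contains only local effects and the check $\mathcal{A}^{\#}(\hat{S}_t,L)\subseteq\mathsf{Scope}(L)$ passes. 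Freshness holds with $\tau=\infty$, $\mathsf{PathCompat}$ holds because the only label is in $\mathcal{M}$, and the oracle accepts because the goal is still testing. Hence $\mathsf{Valid}_\Omega$ holds and (A3) is satisfied at every step. By (A1) the realized effects are $\mathsf{Eff}(S_t,a_i)$, which include a distinct transmission $\epsilon_i$ to a remote endpoint, for instance carrying sequence number $i$. Its type is outside $E$, so $\epsilon_i\notin\mathsf{Scope}(L)$. Since $\mathsf{Granted}(\pi)=\{L\}$, we get $n$ distinct out-of-scope effects, and letting $n\to\infty$ gives unboundedness.

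It remains to confirm that (A2) is what fails: $\epsilon_i\in\mathcal{A}(S_t,L)$ because the action is permitted under $L$ and produces $\epsilon_i$ from $S_t$, but $\epsilon_i\notin\mathcal{A}^{\#}(\hat{S}_t,L)$. This is exactly the violation described in Remark~\ref{rem:a2form}, and it also breaks Equation~\ref{eq:request-sound}, consistent with the compact theorem's premises failing.

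The main obstacle is checking that (A1) and (A3) really hold as stated and that the family is not degenerate. (A1) must be read as mediation of actions, not as a guarantee that realized effects equal requested ones. We must also make sure no clause of $\mathsf{Valid}_\Omega$ secretly inspects the true state. Group (ii) is explicitly evaluated on $\hat{S}_t$, so it does not. We also fix $\mathcal{A}^{\#}$ to be a legitimate over-approximation relative to the view, namely $\mathcal{A}(\hat{S}_t,L)\subseteq\mathcal{A}^{\#}(\hat{S}_t,L)$, so that the only defect is the view itself, as the statement requires.
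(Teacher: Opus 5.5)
Your construction is correct and essentially the paper's. Both use a stale capability component in $\hat{S}_t$, a test-runner call whose nominal request is local but whose realized effect is an outbound transmission, admission by a monitor that applies its rule correctly, and repetition to get unboundedly many out-of-scope effects. The paper makes the view stale through an out-of-envelope tool-addition that is never delivered, because the harness hot-reloads without notifying the monitor. You instead use a delivered in-envelope skill update whose semantic effect the view misses, which illustrates that event delivery alone does not yield (A2). Your choices $\tau=\infty$ and sequence-numbered transmissions also make explicit the freshness and distinctness that the paper's ``repeat $k$ times'' step leaves implicit.
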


\begin{proof}[Construction]
Let $L$ grant read access to a repository and write access to its source subtree, with local file updates and test execution as its permitted effect types.
Let its envelope admit application editing and plan revision.
At $S_0$ the capability configuration $c_0$ contains no network-capable tool, so $\mathcal{A}^{\#}(S_0,L) \subseteq \mathsf{Scope}(L)$ and the lease is admissible.

Now consider a tool-addition transition that installs a network-capable service.
The label is not in $\mathcal{M}$, so a monitor that observed it would refuse further admissions under $L$.
But (A2) fails: the harness hot-reloads its configuration without notifying the monitor, so $\hat{S}$ remains $S_0$ and the view is not closure-sound, because $\mathcal{A}(S_1,L)$ now contains a network effect that $\mathcal{A}^{\#}(\hat{S}_0,L)$ does not.
The agent then invokes the test runner, an action whose effect type lies in $E$, configured so that its output is routed to the new server.
Its nominal request $\mathsf{Req}(a)$ contains only the local test effect.
The monitor evaluates $\mathcal{A}^{\#}(\hat{S}_0,L)$, finds no network effect, and admits.
The realized set $\mathsf{Eff}(S_1,a)$ contains an outbound network transmission, which is not in $E$, hence outside $\mathsf{Scope}(L)$.

Note that (A3) is not violated: the monitor applied its admission rule correctly, to a view that no longer covers the state.
Repeating the invocation $k$ times yields $k$ out-of-scope effects, and since no admission ever consults the true $c_t$, $k$ is unbounded.
\end{proof}

Proposition~\ref{prop:necessity} is the formal content of the paper's motivating question. The grant remains mechanically valid, the monitor behaves correctly with respect to what it knows, and the nominal request stays inside the lease, yet the realized effect escapes. It also identifies what to measure first: not whether products enforce policies, but whether their enforcement path observes mutations at all.

\section{Expanded Limitations and Open Problems}
\label{sec:agenda}

Section~\ref{sec:evaluation} summarizes the three limitations most directly affecting the compact model.
This section expands those limitations and records further open problems exposed by the structured model.

\paragraph{Intent and ceiling placement.}
Theorem~\ref{thm:nonamp} bounds effects, not task correctness, benevolent behavior, or information flow.
Every effect guarantee is also relative to $L_{\max}$: a ceiling drawn too high makes the separation obligation weak, while one drawn too low interrupts useful work.
We give no general method for selecting it and defer that policy and usability problem to product presets, organization policy, or task protocol.

\paragraph{Evidence and external premises.}
The layering shows that weak evidence cannot move the ceiling, but it does not show that any particular signal improves expected exposure or reliability.
That question requires future measurement of which signals predict safe handling at which scope and with what decay.
Proposition~\ref{prop:tighten} additionally requires every path to selected high-impact authority to depend on a premise the session cannot manufacture.
Attestation, an independent countersignature, or a rate condition are candidates, but their interaction cost and actual independence from the agent remain open.

\paragraph{Monitorability of mutation.}
Bitwise identity is too strict and session identity too weak, leaving sound envelopes and a computable $\mathcal{A}^{\#}$ as central open problems (Remarks~\ref{rem:undecidable} and~\ref{rem:drift}).
Provider-side model changes may be hidden, and content becomes authorization-relevant only when it enters persistent control state rather than merely being read.
Effect systems, signed component classes, provider attestations, trust tiers, and provenance-aware persistence are plausible abstractions, but none yet provides a general solution.

\paragraph{Lineage and populations.}
Self-evolving agents create descendants whose behavior rests on accumulated experience, modified skills, or changed code \cite{gao2026selfevolvingsurvey}.
Across generations, continuity requires revocation propagation and a rule for when a descendant becomes a new principal; multi-agent populations can further disperse escalation across individually bounded principals \cite{schroeder2026multiagent}.
Interoperable deployment will require reconciling these definitions with external notions of agent identity \cite{nist2026agentstandards, nccoe2026agentidentity}.

\paragraph{Verification and the boundary of the guarantee.}
A weaker local verifier cannot reproduce a stronger planner's reasoning, making typed plans, effect manifests, and proof-carrying actions a promising route to both effect checking and a restricted oracle $\Omega$.
The effect bound still assumes single-lease attribution and the five runtime conditions of Assumption~\ref{asm:runtime}; goal-respecting authorization additionally requires Assumption~\ref{asm:oracle}.
An external monitor reduces the mutable trusted computing base but does not eliminate bugs, side channels, incomplete mediation, or contextual disagreement about consent.
The mechanism should enforce an explicit policy rather than attempt to predict those preferences.

\section{Proof of Theorem~\ref{thm:nonamp}}
\label{app:proof}

We prove that under Assumptions~\ref{asm:attribution} and~\ref{asm:runtime},
\begin{equation}
    \mathsf{Effects}(\pi) \subseteq \bigcup_{L \in \mathsf{Granted}(\pi)} \mathsf{Scope}(L).
\end{equation}

\begin{proof}
We first record a fact about ancestry chains.
Let $L$ be any live lease.
If $L$ is a root lease, that is a ceiling issued by the user, then $L \in \mathsf{Granted}(\pi)$.
If $L$ is an active lease floating under a grant with ceiling $L_{\max}$, then $L = L^{\mathrm{prop}} \sqcap L_{\max} \preceq L_{\max}$ by the clamp of Definition~\ref{def:band}, and $L_{\max}$ is a root lease.
Otherwise $L$ was created by delegation from some parent $L'$, and by (A4), $L \preceq L' \sqcap L_{\mathrm{subtask}} \preceq L'$.
Since delegation depth is finite and every step of the chain is an instance of $\preceq$, iterating yields a root lease $L_r \in \mathsf{Granted}(\pi)$ with $L \preceq L_r$, and by Proposition~\ref{prop:monotone}, $\mathsf{Scope}(L) \subseteq \mathsf{Scope}(L_r)$.
Call this the \emph{chain bound}.

We now induct on the number $k$ of protected actions admitted during $\pi$.

\emph{Base case} $k=0$. No protected action is admitted, so by (A1) no protected effect is produced and $\mathsf{Effects}(\pi) = \emptyset$, which is contained in any union.

\emph{Inductive step.} Suppose the claim holds after $k$ admitted actions, and consider the $(k{+}1)$-st, an action $a$ admitted at state $S_t$ with observed label sequence $\sigma_t$.
By (A1) the effects added at this step are $\mathsf{Eff}(S_t,a)$, so it suffices to show $\mathsf{Eff}(S_t,a) \subseteq \mathsf{Scope}(L_r)$ for some root lease $L_r$.

By Assumption~\ref{asm:attribution} the action is attributed to a unique live lease $L$, and its whole effect set must be valid under $L$ alone.
By (A5) the monitor's decision procedure and lease store are the intended ones, so its admission of $a$ reflects an evaluation of $\mathsf{Valid}_\Omega(L,\hat{S}_t,\sigma_t,a)$ on its view $\hat{S}_t$.
By (A3) that evaluation succeeded, so unfolding Definition~\ref{def:validity} gives in particular
\begin{equation}
    \mathcal{A}^{\#}(\hat{S}_t,L) \subseteq \mathsf{Scope}(L).
\end{equation}
By (A2), the monitor view at this admission is closure-sound:
\begin{equation}
    \mathcal{A}(S_t,L) \subseteq \mathcal{A}^{\#}(\hat{S}_t,L).
\end{equation}
Since $a$ is an action permitted under $L$ at $S_t$, the one-step sequence consisting of $a$ witnesses $\mathsf{Eff}(S_t,a) \subseteq \mathcal{A}(S_t,L)$.
Chaining the three inclusions gives $\mathsf{Eff}(S_t,a) \subseteq \mathsf{Scope}(L)$.

By the chain bound there is a root lease $L_r \in \mathsf{Granted}(\pi)$ with $\mathsf{Scope}(L) \subseteq \mathsf{Scope}(L_r)$, so $\mathsf{Eff}(S_t,a) \subseteq \mathsf{Scope}(L_r)$.
Combining with the induction hypothesis, the effects produced by the first $k{+}1$ admitted actions lie in $\bigcup_{L \in \mathsf{Granted}(\pi)} \mathsf{Scope}(L)$.
\end{proof}

Two observations about where the assumptions are used.
Soundness of $\Omega$ (Assumption~\ref{asm:oracle}) does not appear in the containment argument and is therefore not a premise of the theorem.
It is required only for the additional interpretation that admitted actions serve the authorized goal; effect containment holds even with an unsound oracle because $R$ and $E$ are checked mechanically.
Conversely (A2) is load-bearing in a single step, the passage from the closure evaluated on $\hat{S}_t$ to the closure at $S_t$, and Proposition~\ref{prop:necessity} shows that step cannot be repaired by strengthening any other assumption.

\end{document}